\newtheorem{theorem}{Theorem}
\newtheorem{lemma}{Lemma}
\newtheorem{proposition}{Proposition}
\theoremstyle{definition}
\newtheorem{definition}{Definition}
\newtheorem{problem}{Problem}
\newtheorem{example}{Example}
\newcommand{\Acal}{\mathcal{A}}
\newcommand{\Fcal}{\mathcal{F}}
\newcommand{\Rcal}{\mathcal{R}}
\newcommand{\bn}{\boldsymbol{n}}
\newcommand{\bmx}[0]{\begin{bmatrix}}
\newcommand{\emx}[0]{\end{bmatrix}}
\newcommand{\vect}[1]{\mathbf{#1}}
\newcommand{\matr}[1]{\mathbf{#1}}
\newcommand{\vb}[0]{\vect{b}}
\newcommand{\vc}[0]{\vect{c}}
\newcommand{\vh}[0]{\vect{h}}
\newcommand{\vx}[0]{\vect{x}}
\newcommand{\vw}[0]{\vect{w}}
\newcommand{\vs}[0]{\vect{s}}
\newcommand{\vf}[0]{\vect{f}}
\newcommand{\vy}[0]{\vect{y}}
\newcommand{\vg}[0]{\vect{g}}
\newcommand{\mW}[0]{\matr{W}}
\newcommand{\mU}[0]{\matr{U}}
\newcommand{\mV}[0]{\matr{V}}
\newcommand{\RR}[0]{\mathbb{R}}
\newcommand{\Scal}[0]{\mathcal{S}}
\newcommand{\abs}[0]{\text{abs}}
\newcommand{\R}{\mathbb{R}}
\title{On the number of response regions of deep feedforward networks with piecewise linear activations}
\author{Razvan Pascanu\\
Universit\'e de Montr\'eal\\
Montr\'eal QC H3C 3J7 Canada\\
\texttt{r.pascanu@gmail.com} \\
\And
Guido Mont\'ufar \\
Max Planck Institute for Mathematics in the Sciences \\
Inselstra\ss e 22, 04103 Leipzig, Germany \\
\texttt{montufar@mis.mpg.de} \\
\AND
Yoshua Bengio\\
Universit\'e de Montr\'eal\\
Montr\'eal QC H3C 3J7 Canada\\
\texttt{yoshua.bengio@umontreal.ca}\\
}
\begin{document}

\maketitle

\begin{abstract}
This paper explores the complexity of deep feedforward networks with linear
pre-synaptic couplings and rectified linear activations.  This is a
contribution to the growing body of work contrasting the representational power
of deep and shallow network architectures. In particular, we offer a framework
for comparing deep and shallow models that belong to the family of  piecewise
linear functions based on computational geometry. We look at a deep rectifier
multi-layer perceptron (MLP) with linear outputs units and compare it with a
single layer version of the model.  In the asymptotic regime, when the number
of inputs stays constant, if the shallow model has $kn$ hidden units and $n_0$
inputs, then the number of linear regions is $O(k^{n_0}n^{n_0})$. For a $k$
layer model with $n$ hidden units on each layer it is $\Omega(\left\lfloor {n}/
{n_0}\right\rfloor^{k-1}n^{n_0})$. The number
$\left\lfloor{n}/{n_0}\right\rfloor^{k-1}$ grows faster than $k^{n_0}$ when $n$
tends to infinity or when $k$ tends to infinity and $n \geq 2n_0$.
Additionally, even when $k$ is small, if we restrict $n$ to be $2n_0$, we can
show that a deep model has considerably more linear regions that a shallow one.
We consider this as a first step towards understanding the complexity of these
models and specifically towards providing suitable mathematical tools for
future analysis. 

\smallskip
\noindent{\bf Keywords:} Deep learning, artificial neural network, rectifier
unit, hyperplane arrangement, representational power 
\end{abstract}

\section{Introduction}

Deep systems are believed to play an important role in information processing
of intelligent agents. A common hypothesis underlying this belief is that deep
models can be exponentially more efficient at representing some functions than
their shallow counterparts~\citep[see][]{bengio2009learning}. 

The argument is usually a compositional one. Higher layers in a deep model can
\textit{re-use} primitives constructed by the lower layers in order to build
gradually more complex functions.  For example, on a vision task, one would
hope that the first layer learns Gabor filters capable to detect edges of
different orientation.  These edges are then put together at the second layer
to form \textit{part-of-object} shapes. On higher layers, these part-of-object
shapes are combined further to obtain detectors for more complex part-of-object
shapes or objects. Such a behaviour is empirically illustrated, for instance,
in \citet{Zeiler+et+al-arxiv2013,HonglakL2009}.  On the other hand, a shallow
model has to construct detectors of target objects based only on the detectors
learnt by the first layer. 

The representational power of computational systems with shallow and deep
architectures  has been studied intensively.  A well known
result~\cite{Hajnal1993129} derived lower complexity bounds for shallow
threshold networks.  Other works have explored the representational power of
generative models based on Boltzmann machines
\citet{montufar2011,NIPS2013_5020} and deep belief networks
\citep{Sutskever:2008,LeRoux:2010,montufar2011refinements}, or have compared
mixtures and products of experts models \citep{montufar2012does}. 

In addition to such inspections, a wealth of evidence for the validity of this
hypothesis comes from deep models consistently outperforming shallow ones on a
variety of tasks and datasets \citep[see, e.g.,][]{Goodfellow_maxout_2013,
Hinton-et-al-arxiv2012, Hinton-et-al-2012}.  However, theoretical results on
the representational power of deep models are limited, usually due to the
composition of nonlinear functions in deep models, which makes mathematical
analysis difficult.  Up to now, theoretical results have focussed on circuit
operations (neural net unit computations) that are substantially different from
those being used in real state-of-the-art deep learning applications, such as
logic gates~\citep{Hastad86}, linear + threshold units with non-negative
weights~\citep{Hastad91} or polynomials~\citep{YoshuaOlivier}.
\citet{YoshuaOlivier} show that deep sum-product networks \citep{PoonDomingos}
can use exponentially less nodes to express some families of polynomials
compared to the shallow ones. 

The present note analyzes the representational power of deep MLPs with
rectifier units.  Rectifier units \citep{Glorot+al-AI-2011-small, Nair-2010}
and piecewise linearly activated units in general (like the \textit{maxout}
unit~\citep{Goodfellow_maxout_2013}), are becoming popular choices in designing
deep models, and most current state-of-the-art results involve using one of
such activations \citep{Goodfellow_maxout_2013,Hinton-et-al-arxiv2012}.
\citet{Glorot+al-AI-2011-small} show that rectifier units have several
properties that make the optimization problem easier than the more traditional
case using smooth and bounded activations, such as \textit{tanh} or
\textit{sigmoid}.

In this work we take advantage of the piecewise linear nature of the rectifier
unit to mathematically analyze the behaviour of deep rectifier MLPs.  Given
that the model is a composition of piecewise linear functions, it is itself a
piecewise linear function. We compare the flexibility of a deep model with that
of a shallow model by counting the number of linear regions they define over
the input space for a fixed number of hidden units. This is the number of
pieces available to the model in order to approximate some arbitrary nonlinear
function. For example, if we want to perfectly approximate some curved boundary
between two classes, a rectifier MLP will have to use infinitely many linear
regions.  In practice we have a finite number of pieces, and if we assume that
we can perfectly learn their optimal slopes, then the number of linear regions
becomes a good proxy for how well the model approximates this boundary.  In
this sense, the number of linear regions is an upper bound for the
flexibility of the model. In practice, the linear pieces are not independent
and the model may not be able to learn the right slope for each linear region.
Specifically, for deep models there is a correlation between regions, which
results from the sharing of parameters between the functions that describe the
output on each region. 

This is by no means a negative observation. If all the linear regions of the
deep model were independent of each other, by having many more linear regions,
deep models would grossly overfit. The correlation of the linear regions of a
deep model results in its ability to generalize, by allowing it to better
represent only a small family of structured functions. These are functions that
look complicated (e.g., a distribution with a huge number of modes) but that
have an underlying structure that the network can `compress' into its
parameters.  The number of regions, which indicates the number of variations
that the network can represent, provides a measure of how well it can fit this
family of structured functions (whose approximation potentially needs
infinitely many linear regions). 

We believe that this approach, based on counting the number of linear regions,
is extensible to any other piecewise linear activation function and also to other
architectures, including the \textit{maxout} activation and the convolutional
networks with rectifier activations. 

We know the maximal number of regions of linearity of functions computable by a
shallow model with a fixed number of hidden units.  This number is given by a
well studied geometrical problem. The main insight of the present work is to
provide a geometrical construction that describes the regions of linearity of
functions computed by deep models.  We show that in the asymptotic regime,
these functions have many more linear regions than the ones computed by shallow
models, for the same number of hidden units.

For the single layer case, each hidden unit divides the input space in two,
whereby the boundary is given by a hyperplane. For all input values on one side
of the hyperplane, the unit outputs a positive value. For all input values on
the other side of the hyperplane, the unit outputs $0$. Therefore, the question
that we are asking is: Into how many regions do $n$ hyperplanes split space?
This question is studied in geometry under the name of hyperplane arrangements,
with classic results such as Zaslavsky's theorem.
Section~\ref{section:onelayer} provides a quick introduction to the subject. 

For the multilayer version of the model we rely on the following intuition.  By
using the rectifier nonlinearity, we identify multiple regions of the input
space which are mapped by a given layer into an equivalent set of activations
and represent thus equivalent inputs for the next layers.  That is, a hidden
layer can perform a kind of \textit{or} operation by reacting similarly to
several different inputs.  Any subsequent computation made on these activations
is replicated on all equivalent inputs.

\smallskip 

This paper is organized as follows.  In Section~\ref{section:preliminaries} we
provide definitions and basic observations about piecewise linear functions.
In Section~\ref{section:onelayer} we discuss rectifier networks with one single
hidden layer and describe their properties in terms of hyperplane arrangements
which are fairly well known in the literature.  In
Section~\ref{section:klayers} we discuss deep rectifier networks and prove our
main result, Theorem~\ref{theorem:klayermodel}, which describes their
complexity in terms of the number of regions of linearity of functions that
they represent.  Details about the asymptotic behaviour of the results derived
in Sections~\ref{section:onelayer} and~\ref{section:klayers} are given in the
Appendix~\ref{section:asymptotic}.  In Section~\ref{sec:opt1} we analyze a
special type of deep rectifier MLP and show that even for a small number of
hidden layers it can generate a large number of linear regions.  In
Section~\ref{section:conclusions} we offer a discussion of the results. 

\section{Preliminaries}\label{section:preliminaries}

We consider classes of functions (models) defined in the following way. 

\begin{definition}
    A {\em rectifier feedforward network} is a layered feedforward network, or
    multilayer perceptron (MLP), as shown in Fig.~\ref{figure:nn}, with
    following properties.  Each hidden unit receives as inputs the real valued
    activations $x_1,\ldots,x_n$ of all units in the previous layer, computes
    the weighted sum $$s = \sum_{i\in[n]} w_{i} x_i + b, $$  and outputs the
    rectified value $$\text{rect}(s)=\max\{0,s\}.$$  The real parameters
    $w_1,\ldots, w_n$ are the {\em input weights} and $b$ is the {\em bias} of
    the unit.  The output layer is a {\em linear layer}, that is, the units in
    the last layer compute a linear combination of their inputs and output it
    unrectified.

    Given a vector of naturals $\boldsymbol{n}=(n_0, n_1,\ldots, n_L)$, we
    denote by $\Fcal_{\boldsymbol{n}}$ the set of all functions
    $\R^{n_0}\to\R^{n_L}$ that can be computed by a rectifier feedforward
    network with $n_0$ inputs and $n_l$ units in layer $l$ for $l\in[L]$. The
    elements of $\Fcal_{\boldsymbol{n}}$ are continuous piecewise linear
    functions. 

    We denote by $\Rcal(\bn)$ the maximum of the number of regions of linearity or {\em response regions} 
    over all functions from $\Fcal_{\bn}$.  For clarity, given a function
    $f\colon \R^{n_0}\to\R^{n_L}$, a connected open subset $R\subseteq\R^{n_0}$
    is called a {\em region of linearity} or {\em linear region} or {\em response region} of $f$ if the
    restriction $f|_{R}$ is a linear function and for any open set $\tilde
    R\supsetneq R$ the restriction $f|_{\tilde R}$ is not a linear function.
    In the next sections we will compute bounds on $\mathcal{R}(\bn)$ for
    different choices of $\bn$. We are especially interested in the comparison
    of shallow networks with one single very wide hidden layer and deep
    networks with many narrow hidden layers. 
\end{definition}

In the remainder of this section we state three simple lemmas. 

\begin{figure}
\centering
\includegraphics[clip=true,trim=6cm 19.2cm 6cm 2.5cm]{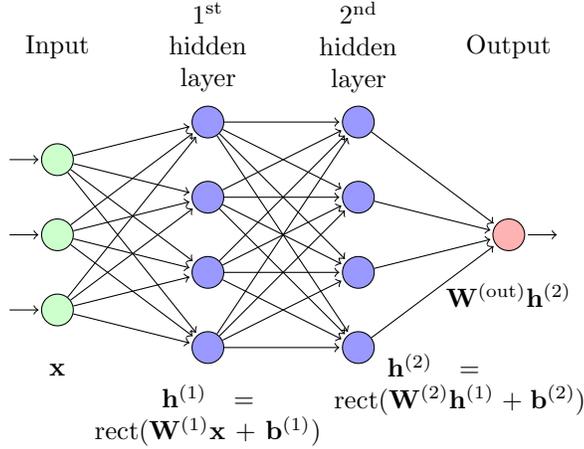}
\caption{Illustration of a rectifier feedforward network with two hidden layers. }
\label{figure:nn}
\end{figure}

The next lemma states that a piecewise linear function $f = ( f_i)_{i\in[k]}$
has as many regions of linearity as there are distinct intersections of regions
of linearity of the coordinates $f_i$. 
    
\begin{lemma}\label{lemma:nrregions} 
    Consider a width $k$ layer of rectifier units. 
    Let $R^i=\{R^i_1,\ldots, R^i_{N_i} \}$ be the regions of linearity of the
    function $f_i\colon \R^{n_0}\to\R$ computed by the $i$-th unit, for all $i\in [k]$. 
    Then the regions of linearity of the function $f = (f_i)_{i\in[k]}\colon \R^{n_0}
    \to \R^k$ computed by the rectifier layer are the elements of the set 
    $\{ R_{j_1,\ldots,j_k} = R^1_{j_1}\cap \cdots \cap
    R^k_{j_k}\}_{ (j_1,\ldots, j_k) \in [N_1]\times\cdots \times [N_k]}$.  
\end{lemma}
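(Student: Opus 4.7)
The plan is to verify the two defining properties of a region of linearity, namely linearity of the restriction and maximality, for each nonempty intersection $R_{j_1,\ldots,j_k} = R^1_{j_1}\cap\cdots\cap R^k_{j_k}$, and then to check that every region of linearity of $f$ arises this way. A quick preliminary observation to record is that for a vector-valued map $f=(f_i)_{i\in[k]}$, the restriction $f|_R$ is linear on an open set $R\subseteq \R^{n_0}$ if and only if each coordinate $f_i|_R$ is linear; this reduces the problem to analyzing where all $f_i$ are simultaneously linear.

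First I would fix a tuple $(j_1,\ldots,j_k)$ for which $R_{j_1,\ldots,j_k}\neq\emptyset$ and check the three ingredients. Openness is immediate, as a finite intersection of open sets is open. Connectedness uses the specific structure of rectifier units: each $f_i$ has exactly two regions of linearity, namely the half-spaces $\{x:w_i\cdot x+b_i>0\}$ and $\{x:w_i\cdot x+b_i<0\}$, which are convex; hence $R_{j_1,\ldots,j_k}$ is an intersection of half-spaces, which is convex and therefore connected. Linearity of $f|_{R_{j_1,\ldots,j_k}}$ is inherited from linearity of each $f_i$ on the larger set $R^i_{j_i}\supseteq R_{j_1,\ldots,j_k}$.

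Next I would establish maximality. Suppose $\tilde R\supsetneq R_{j_1,\ldots,j_k}$ is an open set. Pick $y\in\tilde R\setminus R_{j_1,\ldots,j_k}$; then $y\notin R^i_{j_i}$ for some index $i$. Since $R^i_{j_i}$ is itself maximal for $f_i$, the restriction $f_i|_{R^i_{j_i}\cup\{y\}}$, and therefore $f_i|_{\tilde R}$, fails to be linear; consequently $f|_{\tilde R}$ is not linear. This shows $R_{j_1,\ldots,j_k}$ is indeed a response region of $f$.

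Conversely, let $R$ be any region of linearity of $f$. Then each $f_i|_R$ is linear, and by connectedness of $R$ there is a unique index $j_i$ with $R\subseteq R^i_{j_i}$ (any two distinct regions of linearity of $f_i$ are disjoint open sets, so $R$ cannot meet two of them without losing connectedness or linearity). Hence $R\subseteq R_{j_1,\ldots,j_k}$, and by the maximality of $R$ combined with the maximality already shown for the intersection, $R=R_{j_1,\ldots,j_k}$. The two collections therefore coincide. I expect the only delicate point to be the connectedness of the intersection and the clean use of maximality in both directions; both are handled cleanly by the fact that each $R^i_{j_i}$ is an open half-space.
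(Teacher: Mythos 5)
Your plan is sound, and it is in fact far more thorough than the paper's own proof, which consists entirely of your preliminary observation that $f=(f_1,\ldots,f_k)$ is linear iff every coordinate $f_i$ is. Your forward verification of openness and connectedness (via convexity of intersections of open half-spaces) and your converse direction are fine. However, the maximality step contains a genuine error, on two counts. First, the claim that $f_i|_{R^i_{j_i}\cup\{y\}}$ fails to be linear is simply false when $y$ lies on the boundary hyperplane $\{\vx : \vw_i^\top \vx + b_i = 0\}$ of $R^i_{j_i}$: there $\operatorname{rect}(\vw_i^\top \vx + b_i)$ takes the value $0$, which agrees with both adjacent affine pieces, so adjoining that single point preserves linearity. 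This is precisely why the paper's definition quantifies maximality over \emph{open} supersets $\tilde R \supsetneq R$; the set $R^i_{j_i}\cup\{y\}$ is not open, so the maximality of $R^i_{j_i}$ as a region of $f_i$ says nothing about it. Second, even in the cases where $f_i|_{R^i_{j_i}\cup\{y\}}$ is genuinely non-linear, the inference ``and therefore $f_i|_{\tilde R}$ fails to be linear'' is logically invalid: non-linearity of a restriction transfers to supersets only, and $\tilde R$ contains $R_{j_1,\ldots,j_k}\cup\{y\}$ but in general does \emph{not} contain $R^i_{j_i}$, so it is not a superset of the set on which you established non-linearity.

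The repair uses the openness of $\tilde R$ instead of single-point adjunction. Pick $i$ with $y\notin R^i_{j_i}$; necessarily $\vw_i\neq 0$, since a unit with $\vw_i=0$ is constant and has the single region $\R^{n_0}$. A ball $B\subseteq\tilde R$ around $y$ then contains a nonempty open subset of the half-space complementary to $R^i_{j_i}$ --- this holds whether $y$ lies strictly on the other side of the hyperplane or on the hyperplane itself. On that open subset the gradient of $f_i$ is $0$ or $\vw_i$, the opposite of its constant value on the nonempty open set $R_{j_1,\ldots,j_k}\subseteq\tilde R$, and a single affine function cannot have two distinct gradients on two nonempty open sets; hence $f_i|_{\tilde R}$, and with it $f|_{\tilde R}$, is not linear. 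With this repair (and the caveat, which you already note, that only the nonempty intersections count as regions), your argument is complete and establishes the lemma in full detail, whereas the paper leaves everything beyond the coordinatewise reduction implicit.
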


\begin{proof} 
    A function $f=(f_1,\ldots, f_k) \colon \R^n \to\R^k$ is linear iff all its
    coordinates $f_1, \ldots, f_k$ are.  
\end{proof}

In regard to the number of regions of linearity of the functions represented by
rectifier networks, the number of output dimensions, i.e., the number of linear
output units, is irrelevant. This is the statement of the next lemma. 

\begin{lemma}\label{lemma:partitions1} \label{lemma:multiple_outputs}
The number of (linear) output units of a rectifier feedforward network does
not affect the maximal number of regions of linearity that it can realize.
\end{lemma}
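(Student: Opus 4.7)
The plan is to separate the contribution of the hidden layers from that of the linear output layer. Any function $f \in \Fcal_{\bn}$ factors as $f = T \circ g$, where $g\colon \R^{n_0}\to \R^{n_{L-1}}$ is the continuous piecewise linear function computed by the hidden layers, and $T\colon \R^{n_{L-1}}\to\R^{n_L}$ is the affine map implemented by the output layer. I would then establish matching upper and lower bounds on the number of response regions of $f$ that depend on $\bn$ only through $(n_0,n_1,\ldots,n_{L-1})$.

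For the upper bound, I would observe that whenever $g$ is linear on a connected open set $R$, the composition $f = T\circ g$ is linear on $R$ as well, since $T$ is affine. Consequently each response region of $f$ is a union of response regions of $g$, so the number of response regions of $f$ is at most the number of response regions of $g$, uniformly in $T$.

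For the matching lower bound, I would fix hidden-layer weights that attain the maximal number $N$ of response regions of $g$, and show that a suitable output layer of any width $n_L\geq 1$ yields an $f = T\circ g$ that still has $N$ response regions. It suffices to do this for one output unit: two adjacent response regions of $g$ meet along a hyperplane where the linear parts of $g$ disagree, and any linear functional $v$ that fails to distinguish these two linear parts lies in a proper linear subspace of the dual $(\R^{n_{L-1}})^{\ast}$. The union over the finitely many adjacent pairs is thus a Lebesgue-null subset of $(\R^{n_{L-1}})^{\ast}$, so almost every $v$ produces a scalar output $v \circ g$ with exactly $N$ regions. For $n_L \geq 2$ I take one coordinate of $T$ to be such a generic $v$ and fill in the remaining coordinates arbitrarily; by the intersection-of-regions reasoning of Lemma~\ref{lemma:nrregions} (which applies equally to a linear layer, since no rectification is used in the argument) the joint function still has $N$ regions.

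Combining the two bounds gives $\Rcal(n_0,\ldots,n_{L-1},n_L) = \Rcal(n_0,\ldots,n_{L-1},1)$ for every $n_L\geq 1$, proving the claim. The only mildly delicate point is the genericity argument, which reduces to the elementary fact that a finite union of proper linear subspaces of $(\R^{n_{L-1}})^{\ast}$ has measure zero; everything else is a direct consequence of the factorization $f = T\circ g$ and the affinity of $T$.
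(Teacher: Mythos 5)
Your proof is correct and follows essentially the same route as the paper's: factor the network through the last-hidden-layer map $g$, note that the affine output layer can only merge regions of linearity, and choose output weights outside the finite union of orthogonal complements of the gradient differences across neighbouring regions---precisely the paper's ``vector outside the union of their orthogonal spaces''---so that a single generic output unit already realizes all $N$ regions. You merely spell out two steps the paper leaves implicit (the upper bound via the factorization $f = T\circ g$, and the reduction of the case $n_L\geq 2$ to one generic coordinate), so this is the same argument, carried out with a bit more care.
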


\begin{proof}
Let $f\colon \R^{n_0}\to\R^{k}$ be the map of inputs to activations in the last
hidden layer of a deep feedforward rectifier model.  Let $h = g \circ f$ be the
map of inputs to activations of the output units, given by composition of $f$
with the linear output layer,  $h (\mathbf{x}) = \mathbf{W}^{\text{(out)}}
f(\mathbf{x}) + \mathbf{b}^{\text{(out)}}$.  If the row span of
$\mathbf{W}^{\text{(out)}}$ is not orthogonal to any difference of gradients of
neighbouring regions of linearity of $f$, then $g$ captures all discontinuities
of $\nabla f$.  In this case both functions $f$ and $h$ have the same number of
regions of linearity.  

If the number of regions of $f$ is finite, then the number of differences of
gradients is finite and there is a vector outside the union of their orthogonal
spaces.  Hence a matrix with a single row (a single output unit) suffices to
capture all transitions between different regions of linearity of $f$. 
\end{proof}

\begin{lemma}
    \label{lemma:decomposition}
A layer of $n$ rectifier units with $n_0$ inputs can compute any function that
can be computed by the composition of a linear layer with $n_0$ inputs and
$n_0'$ outputs and a rectifier layer with $n_0'$ inputs and $n_1$ outputs, for
any $n_0,n_0',n_1\in\mathbb{N}$. 
\end{lemma}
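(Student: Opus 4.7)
The plan is to unfold both sides of the claimed equality, observe that the composition of two affine maps is again affine, and then read off the weights and biases of a single rectifier layer that reproduces the composition exactly. So the proof reduces to bookkeeping with matrix products.

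First I would write down explicitly what the two constructions compute on an input $x \in \R^{n_0}$. The linear layer is an affine map $L(x) = W_1 x + b_1$ with $W_1\in\R^{n_0'\times n_0}$ and $b_1 \in \R^{n_0'}$. The rectifier layer on $n_0'$ inputs with $n_1$ outputs acts by $R(y)_i = \max\{0, (W_2)_i y + (b_2)_i\}$ for $i \in [n_1]$, where $W_2 \in \R^{n_1 \times n_0'}$ and $b_2 \in \R^{n_1}$. Composing, I get
\begin{equation*}
(R\circ L)(x)_i \;=\; \max\bigl\{0,\; (W_2)_i (W_1 x + b_1) + (b_2)_i\bigr\} \;=\; \max\bigl\{0,\; (W_2 W_1)_i\, x + (W_2 b_1 + b_2)_i\bigr\}.
\end{equation*}

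Next I would define a single rectifier layer with $n_0$ inputs and $n_1$ units by choosing its weight matrix $W := W_2 W_1 \in \R^{n_1\times n_0}$ and bias vector $b := W_2 b_1 + b_2 \in \R^{n_1}$. By the definition of a rectifier unit, this layer maps $x$ componentwise to $\max\{0, W_i x + b_i\}$, which by the display above coincides with $(R\circ L)(x)$. Hence every function in the image of the composition lies in the image of the single rectifier layer, which is the statement of the lemma.

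There is no real obstacle here; the content is just that the family of rectifier layers on a given input dimension is closed under pre-composition with arbitrary affine maps, because the pre-composition is absorbed into the linear part of each unit. The only thing to notice is that $n_0'$ plays no role in the final expression: the intermediate width can be arbitrary, but only the products $W_2 W_1$ and $W_2 b_1 + b_2$ matter, and these range over valid weights and biases of an $n_0$-input, $n_1$-output rectifier layer as $(W_1,b_1,W_2,b_2)$ vary.
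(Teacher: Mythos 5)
Your proof is correct and takes essentially the same approach as the paper: the paper's proof simply notes that a rectifier layer computes $\vx \mapsto \operatorname{rect}(\mW\vx + \vb)$ and that a composition of affine maps is affine, while you spell out the same observation with the explicit weights $W_2 W_1$ and bias $W_2 b_1 + b_2$.
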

\begin{proof}
A rectifier layer computes functions of the form $\vx \mapsto
\operatorname{rect}(\mW \vx + \vb)$, with $\mW\in\R^{n_1\times n_0}$ and
$\vb\in\R^{n_1}$.  The argument $\mW \vx + \vb$ is an affine function of $\vx$.
The claim follows from the fact that any composition of affine functions is an
affine function. 
\end{proof}

\section{One hidden layer}
\label{section:onelayer}

Let us look at the number of response regions of a single hidden layer MLP
with $n_0$ input units and $n$ hidden units. 
We first formulate the rectifier unit as follows:

\begin{equation}
    \label{eq:indicator_rect}
    \text{rect}(s) = \mathbb{I}(s) \cdot s,
\end{equation}
where $\mathbb{I}$ is the indicator function defined as:

\begin{equation}
    \label{eq:indicator}
    \mathbb{I}(s) = \begin{cases} 1  , &\text{if } s>0 \\
            0  , & \text{otherwise} 
        \end{cases}  .
\end{equation}

We can now write the single hidden layer MLP with $n_y$ outputs as the function
$f\colon \R^{n_0}\to\R^{n_y}$; 

\begin{equation}
    \label{eq:rewrite_MLP}
    \begin{array}{ll} 
    f(\mathbf{x}) &= 
    \mathbf{W}^{\text{(out)}}
    \text{diag}\left(
        \left[
            \begin{array}{c}
                \mathbb{I}(\mathbf{W}^{(1)}_{1:}\mathbf{x} + \mathbf{b}^{(1)}_1) \\
                \vdots \\
            \mathbb{I}(\mathbf{W}^{(1)}_{n_1:}\mathbf{x} + \mathbf{b}^{(1)}_{n_1}) \\
            \end{array}
        \right]\right)
        \left( \mathbf{W}^{(1)} \mathbf{x}  +  \mathbf{b}^{(1)} \right) 
        + \mathbf{b}^{\text{(out)}}
    \end{array} . 
\end{equation}

From this formulation it is clear that each unit $i$ in the hidden layer has
two operational modes. One is when the unit takes value $0$ and one when it takes a non-zero value.
The boundary between these two operational modes is given by the hyperplane
$H_i$ consisting of all inputs $\vx\in\R^{n_0}$ with
$\mathbf{W}^{(1)}_{i,:}\mathbf{x} + \mathbf{b}^{(1)}_i = 0$.  Below this
hyperplane, the activation of the unit is constant equal to zero, and above, it
is linear with gradient equal to $\mW_{i,:}^{(1)}$.  It follows that the number
of regions of linearity of a single layer MLP is equal to the number of regions
formed by the set of hyperplanes $\{ H_i\}_{i\in[n_1]}$. 

A finite set of hyperplanes in a common $n_0$-dimensional Euclidian space is called an $n_0$-dimensional {\em hyperplane arrangement}. 
A {\em region} of an arrangement $\Acal = \{H_i\subset \R^{n_0}\}_{i\in[n]}$ is a connected
component of the complement of the union of the hyperplanes, i.e., a connected component of $\R^{n_0}\setminus (\cup_{i\in[n]} H_i)$. 
To make this clearer, consider an arrangement $\Acal$ consisting of hyperplanes $H_i=\{\vx\in\R^{n_0}\colon
\mW_{i,:} \vx + \vb_i=0 \}$ for all $i\in[n]$, for some $\mW\in\R^{n\times n_0}$ and some $\vb\in\R^{n}$. 
A region of $\Acal$ is a set of points of the form $R
=\{\vx\in\R^{n_0} \colon \operatorname{sgn} (\mW \vx +  \vb ) = \vs \}$ for some sign vector $\vs\in\{-,+\}^n$. 

A region of an arrangement is {\em relatively bounded} if its intersection with the space spanned
by the normals of the hyperplanes is bounded.  We denote by $r(\Acal)$ the
number of regions and by $b(\Acal)$ the number of relatively bounded regions of
an arrangement $\Acal$. 
The {\em essentialization} of an arrangement $\mathcal{A}= \{H_i \}_i$ is the arrangement consisting of the hyperplanes 
$H_i \cap \mathcal{N}$ for all $i$, defined in the span $\mathcal{N}$ of the normals
of the hyperplanes $H_i$. For example, the essentialization of an arrangement of two
non-parallel planes in $\R^3$ is an arrangement of two lines in a plane.

\begin{problem} 
    \label{problem1} 
    How many regions are generated by an arrangement of $n$ hyperplanes in $\R^{n_0}$?  
\end{problem}

The general answer to Problem~\ref{problem1} is given by Zaslavsky's
theorem~\citep[][Theorem~A]{zaslavsky1975facing}, which is one of the central
results from the theory of hyperplane arrangements. 

We will only need the special case of hyperplanes in \textit{general position},
which realize the maximal possible number of regions.  Formally, an
$n$-dimensional arrangement $\Acal$ is in general position if for any subset
$\{H_1,\ldots H_p \}\subseteq \Acal$ the following holds.  (1)~~\!If $p\leq n$,
then $\dim(H_1\cap\cdots\cap H_p) = n-p$. (2)~~\!If $p > n$, then
$H_1\cap\cdots\cap H_p=\emptyset$.  An arrangement is in general position if
the weights $\mW$, $\vb$ defining its hyperplanes are generic. This means that any arrangement
can be perturbed by an arbitrarily small perturbation in such a way that the
resulting arrangement is in general position. 

For arrangements in general position, Zaslavsky's theorem can be stated in the
following way~\cite[see][Proposition~2.4]{Stanley04}. 

\begin{proposition}\label{proposition:genericarrangement}
    Let $\Acal$ be an arrangement of $m$ hyperplanes in general position in
    $\R^{n_0}$.  Then
    \begin{eqnarray*} 
        r(\Acal) & = & \sum_{s=0}^{n_0}{m\choose s}\\ 
        b(\Acal) & = & {m-1\choose n_0}.
\end{eqnarray*} \end{proposition}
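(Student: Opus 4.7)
The plan is to prove both formulas by simultaneous induction on the number of hyperplanes $m$, using the deletion-restriction technique standard in hyperplane arrangement theory. Pick any hyperplane $H\in\Acal$ and set $\Acal' = \Acal\setminus\{H\}$ and $\Acal'' = \{H\cap H'\colon H'\in\Acal'\}$, the latter viewed as an arrangement inside $H\cong\R^{n_0-1}$. Because $\Acal$ is in general position, $\Acal'$ is a generic arrangement of $m-1$ hyperplanes in $\R^{n_0}$ and $\Acal''$ is a generic arrangement of $m-1$ hyperplanes in $\R^{n_0-1}$, so the inductive hypothesis applies to both.

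For the region count, the hyperplane $H$ is cut by the traces of the other members of $\Acal$ into exactly $r(\Acal'')$ pieces, each contained in a single region of $\Acal'$ and splitting that region into two. Hence $r(\Acal)=r(\Acal')+r(\Acal'')$, and by the inductive hypothesis
\[ r(\Acal)=\sum_{s=0}^{n_0}\binom{m-1}{s}+\sum_{s=0}^{n_0-1}\binom{m-1}{s}. \]
Shifting the index in the second sum and applying Pascal's identity $\binom{m-1}{s}+\binom{m-1}{s-1}=\binom{m}{s}$ collapses this to $\sum_{s=0}^{n_0}\binom{m}{s}$. The base case $m=0$ gives $r=1=\binom{0}{0}$, completing this half of the induction.

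For the count of relatively bounded regions, I would establish the analogous recursion $b(\Acal)=b(\Acal')+b(\Acal'')$ for generic arrangements. Granting this, the inductive step is again a single application of Pascal's identity, $\binom{m-2}{n_0}+\binom{m-2}{n_0-1}=\binom{m-1}{n_0}$, so the work is entirely in the recursion. Justifying it requires a careful case analysis: for each region $R'$ of $\Acal'$ meeting $H$, one determines whether the two halves $R'\cap\{H>0\}$ and $R'\cap\{H<0\}$ inherit relative boundedness from $R'$ or acquire it for the first time, and one checks that the newly bounded regions are in bijection with the bounded regions of the $(n_0-1)$-dimensional trace $\Acal''$, while regions of $\Acal'$ disjoint from $H$ contribute unchanged. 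Pairing this against the bounded regions counted by $b(\Acal')$ yields the identity.

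The main obstacle is precisely this bounded-case bookkeeping: relative boundedness is a global property, so cutting a region with a new hyperplane can change bounded counts in ways that are invisible when one only tracks total regions. If the direct case analysis becomes unwieldy, I would fall back on Zaslavsky's theorem in its characteristic-polynomial form. For a generic arrangement, the intersection lattice is the truncated Boolean lattice of subsets of $[m]$ of size at most $n_0$, on which the M\"obius function takes the value $(-1)^s$ on a rank-$s$ flat; this gives $\chi_{\Acal}(t)=\sum_{s=0}^{n_0}(-1)^s\binom{m}{s}t^{n_0-s}$, from which $r(\Acal)=(-1)^{n_0}\chi_{\Acal}(-1)$ and $b(\Acal)=(-1)^{n_0}\chi_{\Acal}(1)$, together with the identity $\sum_{s=0}^{n_0}(-1)^s\binom{m}{s}=(-1)^{n_0}\binom{m-1}{n_0}$, recover the two stated formulas.
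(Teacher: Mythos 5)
Your proposal is correct, but note that it proves more than the paper does: the paper never proves this proposition. It imports the statement as a known special case of Zaslavsky's theorem, citing Proposition~2.4 of Stanley's lecture notes on hyperplane arrangements, and only illustrates the region-count formula in dimension $n_0=2$ via a sweep-line induction. Your deletion--restriction induction for $r(\Acal)$ is precisely the $n_0$-dimensional generalization of that planar sketch: there, the new line is cut into $m+1$ pieces by its intersections with the existing lines, and $m+1$ is exactly $r(\Acal'')$ for the induced arrangement of $m$ generic points on that line. Your observations that general position is inherited by the deletion $\Acal'$ and the restriction $\Acal''$, the recursion $r(\Acal)=r(\Acal')+r(\Acal'')$, and the Pascal-identity bookkeeping are all correct, so the $r(\Acal)$ half is a complete, self-contained proof of something the paper handles by citation.

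One caveat on the $b(\Acal)$ half. The recursion $b(\Acal)=b(\Acal')+b(\Acal'')$ is \emph{not} valid unconditionally: it requires $\rank(\Acal')=\rank(\Acal)$, which for a generic arrangement means $m-1\geq n_0$. At $m=1$ it fails outright: $b(\Acal)=0$, while under the paper's definition of relatively bounded (intersection with the span of the normals is bounded) the empty deletion and empty restriction each have $b=1$, so the right-hand side equals $2$. Your proposed bijection between newly bounded regions and bounded regions of $\Acal''$ breaks in exactly this low-$m$, rank-dropping regime, so the ``careful case analysis'' must either be restricted to $m>n_0$, with the cases $1\leq m\leq n_0$ checked directly (easy, since $\binom{m-1}{n_0}=0$ there: the essentialization is $m$ generic hyperplanes in $\R^m$, all of whose regions are unbounded), or you should promote your fallback to the main argument: the computation $\chi_{\Acal}(t)=\sum_{s=0}^{n_0}(-1)^s\binom{m}{s}t^{n_0-s}$ from the truncated Boolean intersection lattice, Zaslavsky's evaluations at $t=-1$ and $t=1$, and the identity $\sum_{s=0}^{n_0}(-1)^s\binom{m}{s}=(-1)^{n_0}\binom{m-1}{n_0}$ are all correct as you state them and close the proof completely. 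So the statement is fully established by your proposal; just be aware that the primary plan for $b$ contains a genuine edge-case trap, and it is the Zaslavsky route that actually finishes the job.
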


In particular, the number of regions of a $2$-dimensional arrangement $\Acal_m$
of $m$ lines in general position is equal to 
\begin{equation}
    \label{eq:region_2d}
r(\Acal_m) = 
{m \choose 2} + m + 1 . 
\end{equation}

    \begin{figure}[t]
    \bigskip
    \centering 
    \includegraphics[scale=.5]{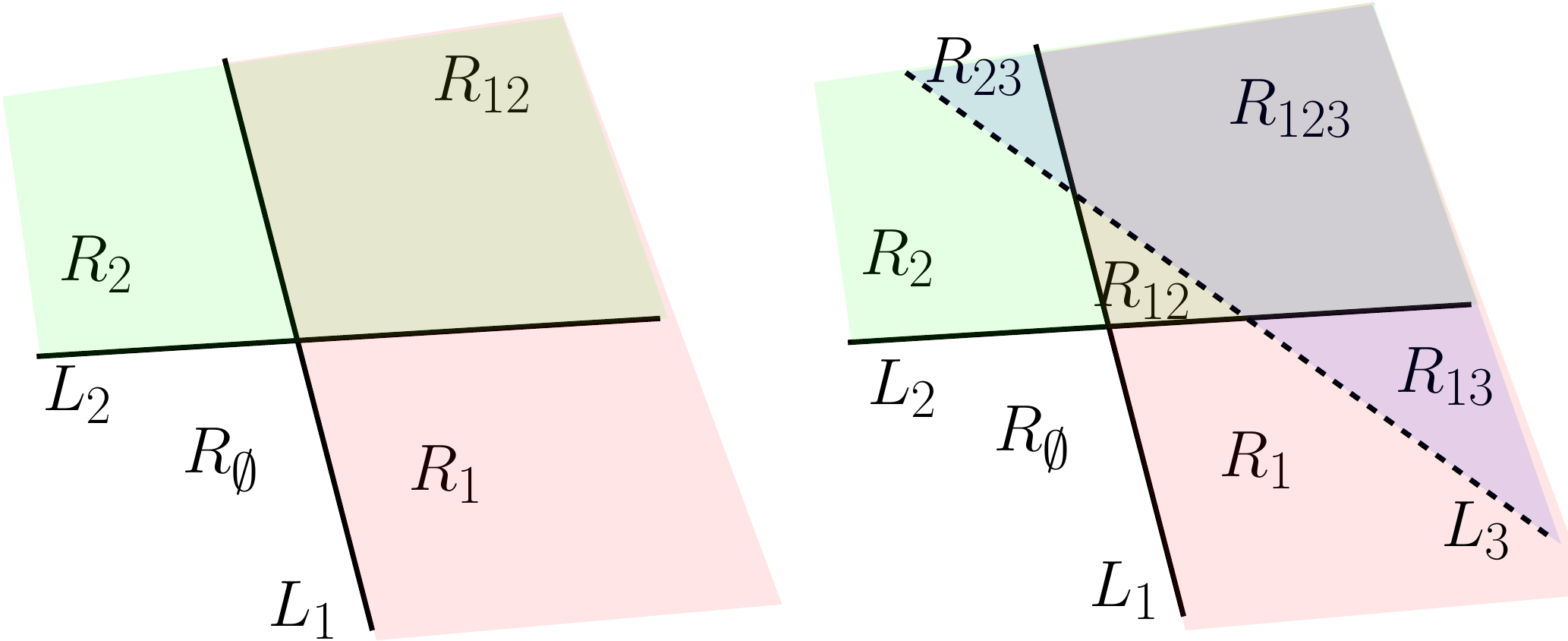} 
    \caption{Induction step of the hyperplane sweep method for counting the regions of line arrangements in the plane. 
    }
    \label{fig:sweep_ind_case}
\end{figure}

For the purpose of illustration, we sketch a proof of eq.~\eqref{eq:region_2d}
using the \textit{sweep hyperplane} method.  We proceed by induction over the
number of lines $m$.

\textit{Base case $m = 0$. } It is obvious that in this case there is a
single region, corresponding to the entire plane. Therefore, $r(\Acal_0) =
1$. 

\textit{Induction step.} Assume that for $m$ lines the number of regions is
$r(\Acal_{m}) = {m \choose 2} + m+1$, and add a new line $L_{m+1}$ to the
arrangement.  Since we assumed the lines are in general position, $L_{m+1}$
intersects each of the existing lines $L_k$ at a different point.
Fig.~\ref{fig:sweep_ind_case} depicts the situation for $m=2$.

The $m$ intersection points split the line $L_{m+1}$ into $m+1$ segments.
Each of these segments cuts a region of $\mathcal{A}_m$ in two pieces.
Therefore, by adding the line $L_{m+1}$ we get $m+1$ new regions.  In
Fig.~\ref{fig:sweep_ind_case} the two intersection points result in three
segments that split each of the regions $R_{1}, R_{01}, R_{0}$ in two.
Hence
\begin{align*}
    r(\Acal_{m+1})& = r(\Acal_{m}) + m + 1= \frac{m(m-1)}{2} + m + 1 + m+1 
                  = \frac{m(m+1)}{2} + (m+1)+1 \\
                  & = {m+1 \choose 2} + (m+1) + 1 . 
    \end{align*}

For the number of response regions of MLPs with one single hidden layer we obtain the following. 

\begin{proposition}\label{proposition:onelayermodel}
    The regions of linearity of a function in the model $\Fcal_{(n_0,n_1,1)}$
    with $n_0$ inputs and $n_1$ hidden units are given by the regions of an
    arrangement of $n_1$ hyperplanes in $n_0$-dimensional space.  The maximal
    number of regions of such an arrangement is
    $\Rcal(n_0,n_1,n_y)=\sum_{j=0}^{n_0}{ n_1 \choose j} $.  
\end{proposition}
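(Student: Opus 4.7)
The plan is to assemble the proposition from three ingredients already laid out in the excerpt: the hyperplane interpretation of a single rectifier unit, Lemma~\ref{lemma:nrregions} for combining units in a layer, Lemma~\ref{lemma:multiple_outputs} for ignoring the output width, and finally Proposition~\ref{proposition:genericarrangement} (Zaslavsky, general position case) to count the regions.

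First, I would reduce to the hidden layer alone. By Lemma~\ref{lemma:multiple_outputs}, the number of linear output units is irrelevant for the maximal number of response regions, so it suffices to count the regions of linearity of the map $\vx\mapsto\operatorname{rect}(\mW^{(1)}\vx+\vb^{(1)})$ from $\R^{n_0}$ to $\R^{n_1}$. For the $i$-th hidden unit the activation is a linear function on each of the two open halfspaces determined by the hyperplane $H_i=\{\vx\in\R^{n_0}:\mW^{(1)}_{i,:}\vx+b^{(1)}_i=0\}$, as is already observed after~\eqref{eq:rewrite_MLP}; hence each coordinate $f_i$ has exactly the two regions of linearity $\{H_i^+,H_i^-\}$ separated by $H_i$.

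Next, I would apply Lemma~\ref{lemma:nrregions} to the whole layer: the regions of linearity of $f=(f_1,\ldots,f_{n_1})$ are precisely the nonempty intersections $H_1^{s_1}\cap\cdots\cap H_{n_1}^{s_{n_1}}$ over sign vectors $(s_1,\ldots,s_{n_1})\in\{+,-\}^{n_1}$. These are exactly the regions of the hyperplane arrangement $\Acal=\{H_i\}_{i\in[n_1]}$ in the sense defined above Proposition~\ref{proposition:genericarrangement}. This proves the first assertion of the proposition and reduces the counting problem to Problem~\ref{problem1}.

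To finish, I would invoke the general position case of Zaslavsky's theorem, Proposition~\ref{proposition:genericarrangement}, which gives $r(\Acal)=\sum_{s=0}^{n_0}\binom{n_1}{s}$ for any arrangement of $n_1$ hyperplanes in general position in $\R^{n_0}$. Since an arbitrarily small perturbation of $(\mW^{(1)},\vb^{(1)})$ places the associated hyperplanes in general position, this bound is attained by some choice of parameters, and by Zaslavsky's theorem it cannot be exceeded by any arrangement. The proposition follows. There is no real obstacle here: the argument is essentially an assembly, and the only small subtlety is to note that achievability of general position is a density statement about the parameter space $(\mW^{(1)},\vb^{(1)})\in\R^{n_1\times n_0}\times\R^{n_1}$, which holds because the non-general-position locus is a finite union of proper algebraic subvarieties.
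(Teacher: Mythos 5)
Your proposal is correct and follows essentially the same route as the paper's own proof: reduce to the hidden layer, identify each unit's two regions of linearity with the halfspaces of its hyperplane, combine via Lemma~\ref{lemma:nrregions}, and count with the general-position case of Zaslavsky's theorem (Proposition~\ref{proposition:genericarrangement}). You merely make explicit two steps the paper leaves implicit --- the appeal to Lemma~\ref{lemma:multiple_outputs} for the output layer and the density argument showing general position is attainable --- which is fine.
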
 
    
\begin{proof} 
    This is a consequence of Lemma~\ref{lemma:nrregions}.  The
    maximal number of regions is produced by an $n_0$-dimensional arrangement
    of $n_1$ hyperplanes in general position, which is given in
    Proposition~\ref{proposition:genericarrangement}.  
\end{proof}

\section{Multiple hidden layers} 
\label{section:klayers}

In order to show that a $k$ hidden layer model can be more expressive than a
single hidden layer one with the same number of hidden units, 
we will need the next three propositions. 

\begin{proposition}
    \label{proposition:arrangement_in_sphere}
Any arrangement can be scaled down and shifted such that all regions of the
arrangement intersect the unit ball. 
\end{proposition}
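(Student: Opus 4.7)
The plan is to exploit the fact that any hyperplane arrangement in $\mathbb{R}^{n_0}$ has only finitely many regions, pick a witness point in each region, and then use an affine contraction toward a common center to pull all these witness points inside the unit ball simultaneously. The transformed arrangement is a genuine ``scaled and shifted'' copy of the original, and its regions are exactly the images of the original regions, each containing the corresponding scaled witness point.

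More concretely, I would proceed as follows. First, enumerate the regions $R_1,\ldots,R_m$ of the given arrangement $\Acal=\{H_i\}_{i\in[n]}$; since each $H_i$ has codimension one, $m$ is finite. Because each $R_j$ is a non-empty open set, I can select a point $q_j\in R_j$. Set $M=\max_{j}\|q_j - q_1\|$ and fix any scalar $\lambda\in(0,1)$ with $\lambda M <1$, for instance $\lambda=1/(M+1)$. Now define the affine map $T\colon\R^{n_0}\to\R^{n_0}$ by $T(x)=\lambda(x-q_1)$, which is a scaling by $\lambda$ around $q_1$ followed by a shift that sends $q_1$ to the origin.

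Next, I would verify that $T(\Acal):=\{T(H_i)\}_{i\in[n]}$ is again a hyperplane arrangement whose regions are precisely the images $T(R_j)$. This is immediate because $T$ is an affine bijection of $\R^{n_0}$: it maps each hyperplane $H_i$ to a hyperplane, and it maps connected components of $\R^{n_0}\setminus\bigcup_i H_i$ bijectively to connected components of $\R^{n_0}\setminus\bigcup_i T(H_i)$. Finally, observe that for every $j$ we have $\|T(q_j)\|=\lambda\|q_j-q_1\|\le \lambda M <1$, so $T(q_j)\in T(R_j)$ lies in the open unit ball; hence every region of the rescaled, shifted arrangement meets the unit ball.

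There is no real obstacle here beyond being careful that $T$ is an affine bijection so that regions are preserved, and that the maximum $M$ is well-defined (which uses finiteness of the number of regions, guaranteed by Proposition~\ref{proposition:genericarrangement} or, more generally, by Zaslavsky's theorem). The argument does not require the arrangement to be in general position and uses only that each region is non-empty and open.
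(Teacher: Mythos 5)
Your proof is correct, and although it shares the paper's overall mechanism---an affine contraction $\vx \mapsto \lambda \vx + \vc$ that pulls a bounded core of the arrangement into the unit ball---the way you certify that \emph{every} region survives the contraction is genuinely different and, in fact, more robust than the paper's. The paper sets $d$ to be the supremum of distances from the origin to points in the \emph{bounded} regions of the essentialization of $\Acal$ and scales by $\frac{r}{2d}$; this implicitly assumes (i) that bounded regions exist, which fails for instance for a single hyperplane (or any arrangement whose essentialization has no bounded region, where $d$ is undefined), and (ii) the unstated geometric fact that every region, including each unbounded one, comes within distance roughly $d$ of the origin. You sidestep both issues by choosing one witness point per region---legitimate because finitely many hyperplanes cut $\R^{n_0}$ into finitely many nonempty open regions---and contracting about $q_1$ so that all witnesses land strictly inside the ball; since $T$ is an affine bijection it carries hyperplanes to hyperplanes and the components of the complement to the components of the complement, so every region of $T(\Acal)$ meets the ball. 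What your route buys is an elementary argument with no essentialization and no case analysis on bounded regions; what the paper's buys is a region-independent geometric radius, at the cost of the gaps just named. Two small remarks: when the arrangement contains at least one hyperplane there are at least two regions, so $M>0$ and $\lambda = 1/(M+1)\in(0,1)$ indeed works (for the degenerate empty arrangement take, say, $\lambda=1/2$); and since each $T(q_j)$ has norm strictly less than $1$ and is an interior point of $T(R_j)$, each region in fact meets the ball in an open neighborhood, which is exactly the slightly stronger form of the proposition invoked later in the proof of Theorem~\ref{theorem:klayermodel}.
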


\begin{figure}[ht]
    \centering 
    \includegraphics[scale=.4]{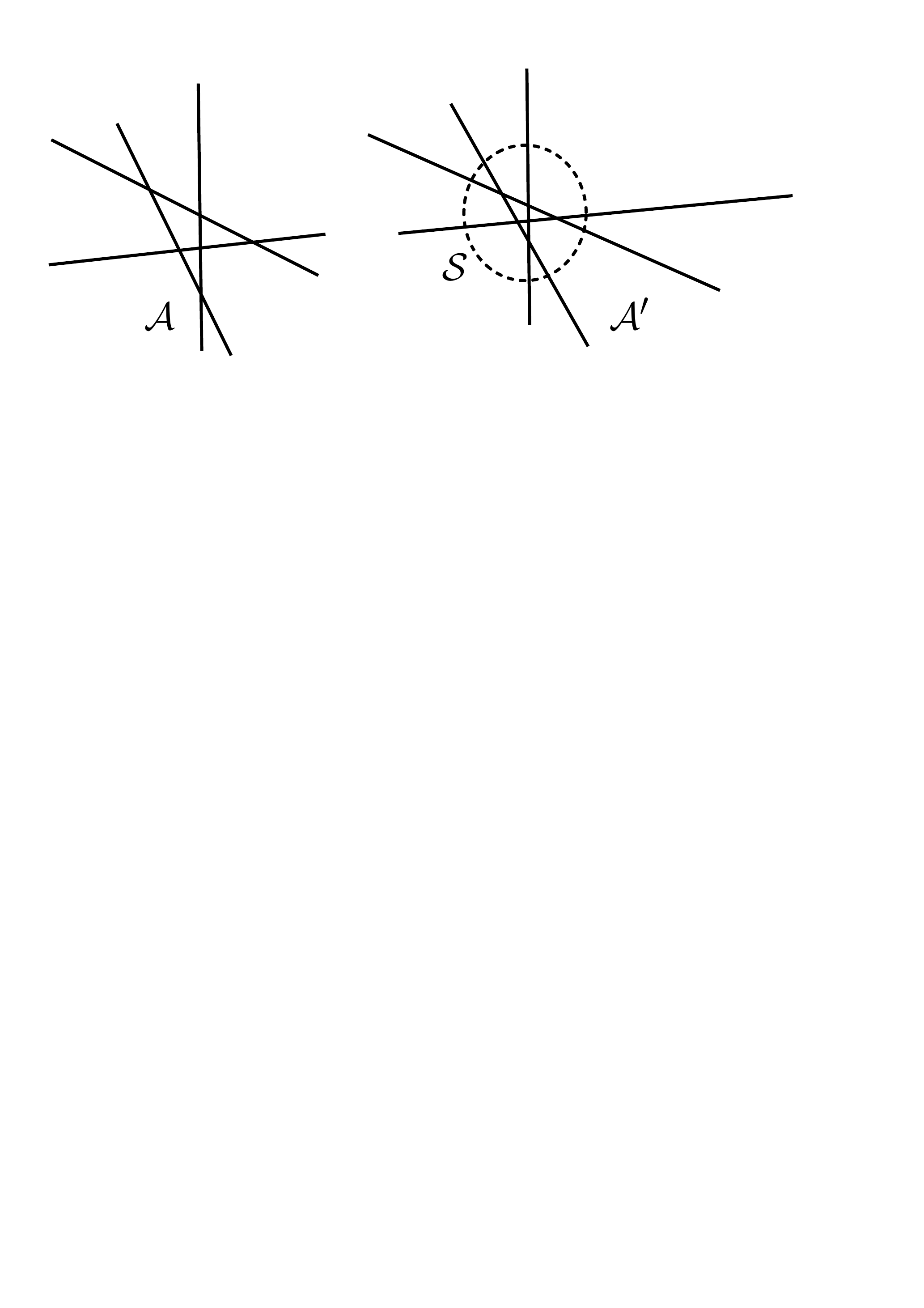} 
    \caption{
An arrangement $\mathcal{A}$ and a scaled-shifted version $\mathcal{A}'$ whose regions intersect the ball $\mathcal{S}$.  
} 
    \label{fig:scaling}
\end{figure}
\begin{proof}
    Let $\Acal$ be an arrangement and let $\mathcal{S}$ be a ball of radius $r$
    and center $\mathbf{c}$. 
    Let $d$ be the supremum of the distance from the origin to a point in a
    bounded region of the essentialization of the arrangement $\Acal$.
    Consider the map $\phi:\R^{n_0}\to\R^{n_0}$ defined by $\phi(\mathbf{x}) =
    \frac{r}{2d} \cdot \mathbf{x} + \mathbf{c}$.  Then $\Acal' = \phi(\Acal)$
    is an arrangement satisfying the claim. 
    It is easy to see that any point with norm bounded by $d$ is mapped to a
    point inside the ball $S$. 
\end{proof}

The proposition is illustrated in Fig.~\ref{fig:scaling}. 

We need some additional notations in order to formulate the next proposition.
Given a hyperplane $H = \{\vx \colon \vw^\top \vx +  b =0\}$, we consider the
region $H^- = \{ \vx \colon \vw^\top \vx + b <0\}$, and the region $H^+ = \{
\vx \colon \vw^\top \vx +  b \geq 0\}$.  If we think about the corresponding
rectifier unit, then $H^+$ is the region where the unit is active and $H^-$ is
the region where the unit is dead.  

Let $R$ be a region delimited by the hyperplanes $\{H_1, \ldots, H_n\}$. We
denote by $R^+\subseteq \{1,\ldots, n\}$ the set of all hyperplane-indices $j$
with $R \subset H^+_j$.  In other words, $R^+$ is the list of hidden units that
are active (non-zero) in the input-space region $R$. 

The following proposition describes the combinatorics of $2$-dimensional
arrangements in general position.  More precisely, the proposition describes
the combinatorics of $n$-dimensional arrangements with $2$-dimensional
essentialization in general position.  Recall that the essentialization of an
arrangement is the arrangement that it defines in the subspace spanned by the
normals of its hyperplanes. 

The proposition guarantees the existence of input weights and bias for a
rectifier layer such that for any list of consecutive units, there is a region
of inputs for which exactly the units from that list are active. 

\begin{proposition}
    \label{proposition:special_arrangement}
    For any $n_0,n\in\mathbb{N}$, $n\geq 2$, there exists an $n_0$-dimensional
    arrangement $\mathcal{A}$ of $n$ hyperplanes such that for any pair
    $a,b\in\{1,\ldots, n\}$ with $a < b$, there is a region $R$ of
    $\mathcal{A}$ with $R^+ = \{a, a+1, \ldots, b\}$. 
\end{proposition}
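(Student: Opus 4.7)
The plan is to construct the arrangement explicitly as tangent lines to a circle, which automatically produces the desired ``consecutive interval'' structure for $R^+$.  Fix a $2$-plane $\Pi$ inside $\R^{n_0}$ and work with the unit circle $C\subset\Pi$ centered at the origin.  Place $n$ points $p_i = (\cos\phi_i,\sin\phi_i)$ on $C$ at strictly increasing angles $\phi_1<\phi_2<\cdots<\phi_n$ confined to an interval of length less than $\pi$ (e.g.\ a half-circle).  Let $L_i$ be the tangent line to $C$ at $p_i$ inside $\Pi$, extended to a hyperplane in $\R^{n_0}$ by Cartesian product with the $(n_0-2)$-dimensional orthogonal complement of $\Pi$, and let $H_i^+$ be the open half-space bounded by $L_i$ that does not contain the origin (this is where the $i$-th rectifier unit is active).

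The core geometric fact is a \emph{near-arc characterization}: since $L_i$ has equation $\langle p_i, x\rangle = 1$ inside $\Pi$, a point $Q = \rho(\cos\psi,\sin\psi)$ with $\rho>1$ lies in $H_i^+$ iff $\rho\cos(\phi_i - \psi) > 1$, iff $|\phi_i - \psi| < \alpha$ where $\alpha := \arccos(1/\rho)$.  Hence the region of the arrangement containing $Q$ satisfies
\[
R^+ \;=\; \{\, i : \phi_i \in (\psi - \alpha,\; \psi + \alpha)\,\},
\]
i.e.\ exactly those indices whose angles lie in an arc of half-width $\alpha$ centered on the direction of $Q$.

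Given a target $\{a,a+1,\ldots,b\}$ with $a<b$, I would set $\psi := (\phi_a + \phi_b)/2$ and pick any $\alpha$ in the open interval
\[
\tfrac{1}{2}(\phi_b - \phi_a) \;<\; \alpha \;<\; \min\!\Bigl(\tfrac{1}{2}(\phi_a+\phi_b) - \phi_{a-1},\; \phi_{b+1} - \tfrac{1}{2}(\phi_a+\phi_b)\Bigr),
\]
where constraints involving $\phi_0$ or $\phi_{n+1}$ are dropped when $a=1$ or $b=n$.  Non-emptiness of this interval follows immediately from the strict monotonicity $\phi_{a-1}<\phi_a$ and $\phi_b<\phi_{b+1}$, and since all $\phi_i$ lie in an interval of length less than $\pi$ the lower bound is less than $\pi/2$, so $\rho := 1/\cos\alpha$ is a valid radius with $\rho>1$.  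The witness $Q := \rho(\cos\psi,\sin\psi)$ then lies in the region with $R^+ = \{a,\ldots,b\}$ by the near-arc characterization; strictness of both inequalities above is exactly what guarantees that $Q$ avoids the hyperplanes $L_{a-1}, L_a, L_b, L_{b+1}$ and thus lies in a genuine open region of the arrangement.

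The main obstacle is the bookkeeping around the edge cases $a=1$ and $b=n$ and the verification that our $n$ hyperplanes are distinct and that the $2$-dimensional essentialization is in general position.  The latter is easy: tangents at distinct non-antipodal points of $C$ are distinct and non-parallel, and no three of them can be concurrent because any common point $Q$ would force the three tangent points to lie on the polar line of $Q$, which meets $C$ in at most two points.  Everything else follows from the explicit formulas above.
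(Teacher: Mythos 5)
Your proof is correct, and it takes a genuinely different route from the paper's. The paper constructs the arrangement \emph{inductively}: $L_1$ is the x-axis, and each new line $L_{n+1}$ is chosen tangent to a circle $\mathcal{S}_n$ large enough to enclose all pairwise intersections of the previous lines, at an angle to the y-axis strictly smaller than all previous angles; the heart of that argument is an intersection-order claim (that $L_{n+1}$ meets $L_1,\ldots,L_n$ in that order, certified via the half-spaces $H_{ij}$ to the right of the vertical lines through the points $B_{ij}=L_i\cap L_j$), so that the successive segments of $L_{n+1}$ cut exactly the regions with $R^+=\{i,\ldots,n\}$ and create the new intervals $\{i,\ldots,n+1\}$, with the remaining intervals supplied by the inductive hypothesis. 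You instead give a one-shot construction: all $n$ lines tangent to a \emph{single} circle at angles $\phi_1<\cdots<\phi_n$ confined to an arc of length less than $\pi$, plus an exact analytic membership criterion (a point $\rho(\cos\psi,\sin\psi)$ with $\rho>1$ activates unit $i$ iff $|\phi_i-\psi|<\arccos(1/\rho)$) and an explicit witness point for each target interval $\{a,\ldots,b\}$. Your version buys rigor and economy: no induction, every step is a verifiable inequality, the survival of previously constructed regions when a line is added --- which the paper's induction tacitly relies on but never checks --- simply does not arise, and general position (which the paper also asserts, via a perturbation-stability remark) falls out of your polar-line concurrency argument; the paper's version, in exchange, stays close to the sweep-hyperplane narrative it develops for eq.~\eqref{eq:region_2d} and makes the incremental region-splitting picture visually explicit. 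One small repair to your write-up: when the upper bound $\min\bigl(\tfrac12(\phi_a+\phi_b)-\phi_{a-1},\,\phi_{b+1}-\tfrac12(\phi_a+\phi_b)\bigr)$ exceeds $\pi/2$ (notably when $a=1$ and $b=n$, where both constraints are dropped), you must additionally cap $\alpha$ strictly below $\pi/2$ for $\rho=1/\cos\alpha$ to be a legitimate radius; this is always possible precisely because, as you note, the lower bound $\tfrac12(\phi_b-\phi_a)$ is below $\pi/2$, so the fix is purely a matter of stating the choice of $\alpha$ as lying in the intersection of your interval with $(0,\pi/2)$.
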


We show that the hyperplanes of a $2$-dimensional arrangement in general
position can be indexed in such a way that the claim of the proposition holds.
For higher dimensional arrangements the statement follows trivially, applying
the $2$-dimensional statement to the intersection of the arrangement with a
$2$-subspace. 

\begin{proof}[Proof of Proposition~\ref{proposition:special_arrangement}] 
 Consider first the case $n_0=2$.  We define the first line $L_1$ of the
 arrangement to be the x-axis of the standard coordinate system.  To define the
 second line $L_2$, we consider a circle $\mathcal{S}_1$ of radius $r\in\R_+$
 centered at the origin.  We define $L_2$ to be the tangent of $\mathcal{S}_1$
 at an angle $\alpha_1$ to the y-axis, where $0<\alpha_1 < \frac{\pi}{2}$.  The
 top left panel of Fig.~\ref{fig:twolines} depicts the situation.  In the
 figure, $R_\emptyset$ corresponds to inputs for which no rectifier unit is
 active, $R_1$ corresponds to inputs where the first unit is active, $R_2$ to
 inputs where the second unit is active, and $R_{12}$ to inputs where both
 units are active.  This arrangement has the claimed properties. 

 Now assume that there is an arrangement of $n$ lines with the claimed
 properties.  To add an \mbox{$(n+1)$-th} line, we first consider the maximal
 distance $d_{\text{max}}$ from the origin to the intersection of two lines
 $L_i \cap L_j$ with $1\leq i < j < n$.  We also consider the
 radius-$(d_{\text{max}}+r)$ circle $\mathcal{S}_{n}$ centered at the origin.
 The circle $\mathcal{S}_{n}$ contains all intersection of any of the first $n$
 lines.  We now choose an angle $\alpha_n$ with $0 < \alpha_{n} < \alpha_{n-1}$
 and define $L_{n+1}$ as the tangent of $\mathcal{S}_{n}$ that forms an angle
 $\alpha_{n}$ with the y-axis.  Fig.~\ref{fig:morelines} depicts adding the
 third and fourth line to the arrangement. 

 \begin{figure}[ht]
    \centering 
    \smallskip
    \begin{minipage}{.4\textwidth}
    \includegraphics[width=\textwidth]{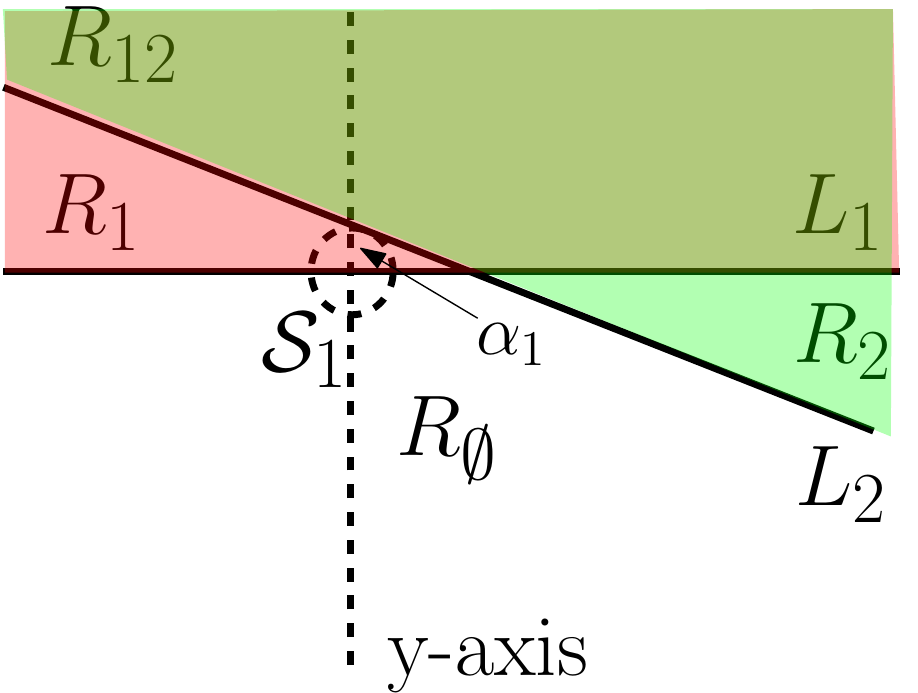} 
    \includegraphics[width=\textwidth]{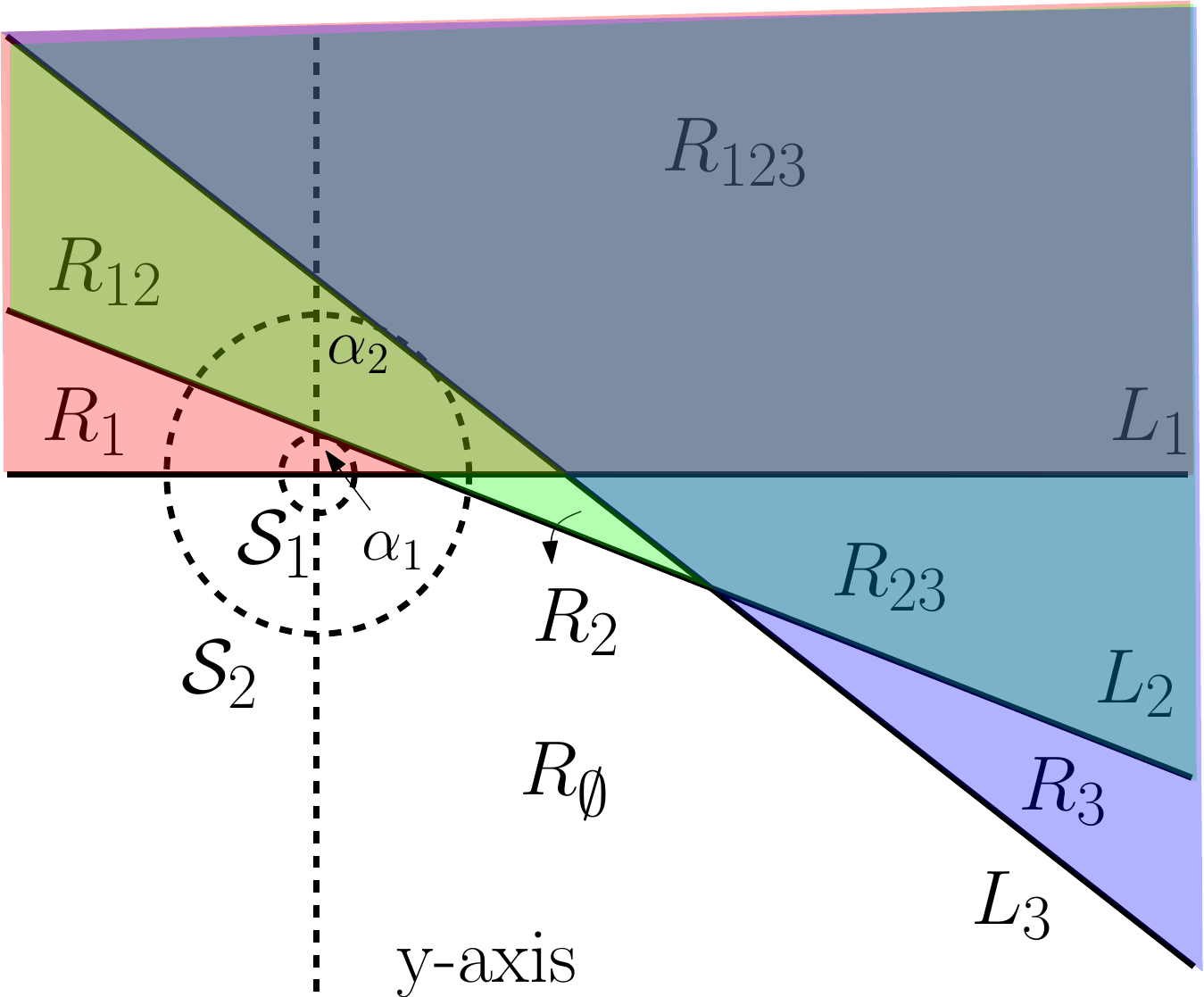} 
    \end{minipage}
    \begin{minipage}{.53\textwidth}
    \includegraphics[width=\textwidth]{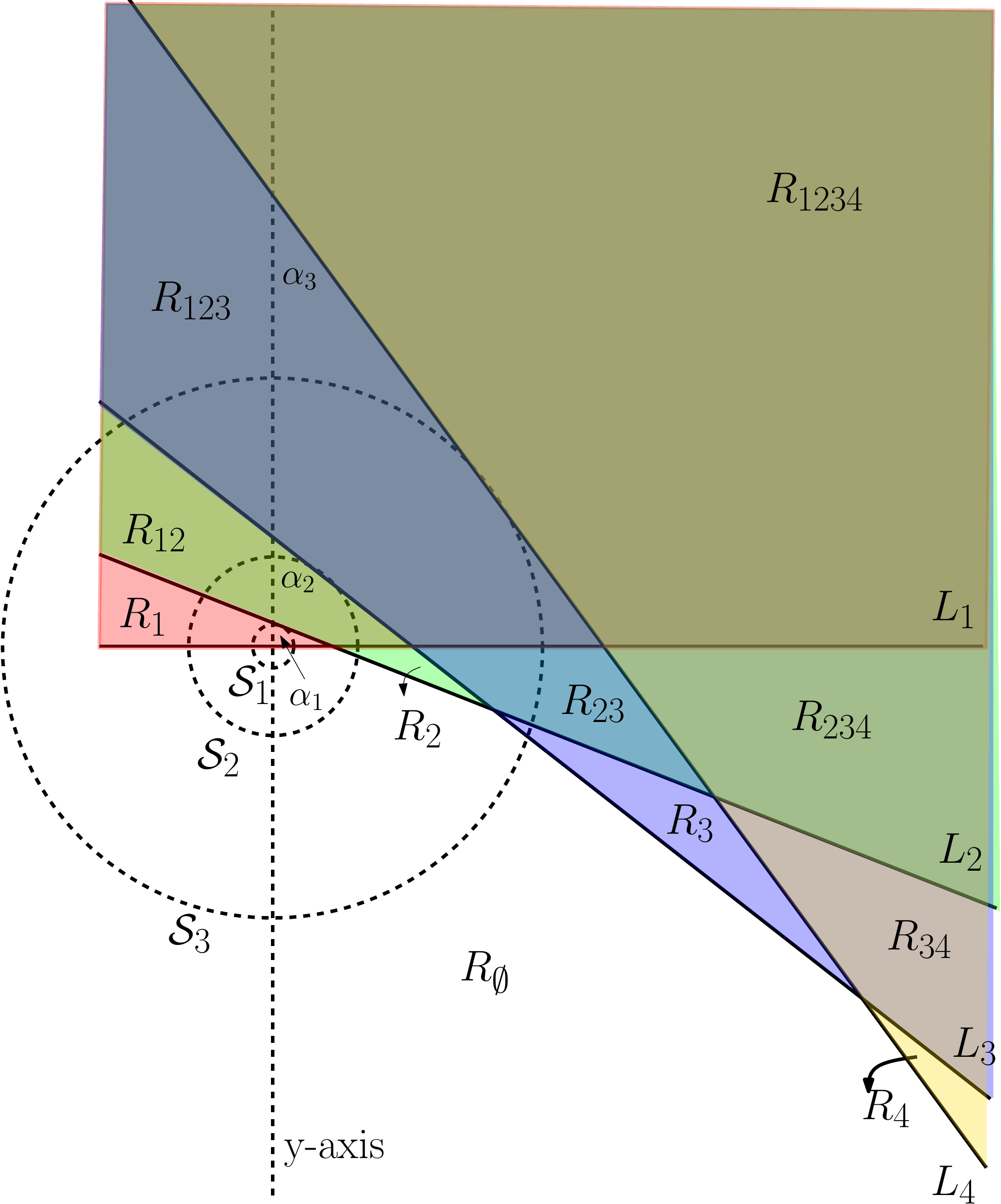} 
    \end{minipage}
    \caption{Illustration of the hyperplane arrangement discussed in
        Proposition~\ref{proposition:special_arrangement}, in the
        $2$-dimensional case.  On the left we have arrangements of two and
        three lines, and on the right an arrangement of four lines.
    \label{fig:morelines} \label{fig:twolines}
    } 
 \end{figure}

After adding line $L_{n+1}$, we have that the arrangement 
\begin{enumerate}
    \item is in general position. 
    \item has regions $R_1', \ldots, R_{n+1}'$ with $R_i'^+ = \{i, i+1, \ldots, n+1\}$ for all $i\in[n+1]$. 
\end{enumerate}

The regions of the arrangement are stable under perturbation of the angles and
radii used to define the lines. 
Any slight perturbation of these parameters preserves the list of regions. 
Therefore, the arrangement is in general position. 

The second property comes from the order in which $L_{n+1}$ intersects all
previous lines.  $L_{n+1}$ intersects the lines in the order in which they were
added to the arrangement: $L_1, L_2, \ldots,  L_n$.  The intersection of
$L_{n+1}$ and $L_i$, $B_{i{n+1}} = L_{n+1} \cap L_i$, is above the lines
$L_{i+1}, L_{i+2}, \ldots, L_{n}$, and hence the segment
$B_{{i-1}{n+1}}B_{i{n+1}}$  between the intersection with $L_{i-1}$ and with
$L_i$, has to cut the region in which only units $i$ to $n$ are active. 

The intersection order is ensured by the choice of angles $\alpha_i$ and the
fact that the lines are tangent to the circles $\mathcal{S}_i$.  For any $i<j$
and $B_{ij} = L_i \cap L_j$ let $T_{ij}$ be the line parallel to the y-axis
passing through $B_{ij}$.  Each line  $T_{ij}$ divides the space in two.  Let
$H_{ij}$ be the half-space to the right of $T_{ij}$.  Within any half-space
$H_{ij}$, the intersection $H_{ij} \cap L_i$ is above $H_{ij} \cap L_j$,
because the angle $\alpha_{i-1}$ of $L_i$ with the y-axis is larger than
$\alpha_{j-1}$ (this means $L_j$ has a stepper decrease).  Since $L_{n+1}$ is
tangent to the circle that contains all points $B_{ij}$, the line $L_{n+1}$
will intersect lines $L_i$ and $L_j$ in $H_{ij}$, and therefore it has to
intersect $L_i$ first.

For $n_0>2$ we can consider an arrangement that is essentially $2$-dimensional
and has the properties of the arrangement described above.  To do this, we
construct a $2$-dimensional arrangement in a $2$-subspace of $\R^{n_0}$ and
then extend each of the lines $L_i$ of the arrangement to a hyperplane $H_i$
that crosses $L_i$ orthogonally.  The resulting arrangement satisfies all
claims of the proposition.
\qedhere
\end{proof}

The next proposition guarantees the existence of a collection of affine maps with shared bias, 
which map a collection of regions to a common output. 

\begin{proposition}\label{proposition:mini}
Consider two integers $n_0$ and $p$.  Let $\mathcal{S}$ denote the
$n_0$-dimensional unit ball and let $R_1,\ldots,R_p\subseteq\R^{n_0}$ be some
regions with non-empty interiors.  Then there is a choice of weights
$\mathbf{c}\in\R^{n_0}$ and $ \mathbf{U}_1, \ldots,\mU_{p}\in\R^{n_0\times
n_0}$ for which $g_i( R_i) \supseteq \mathcal{S} $ for all $i \in [p]$,
where $g_i\colon \R^{n_0}\to \R^{n_0};\, \vy \mapsto \mU_i \vy + \vc$. 
\end{proposition}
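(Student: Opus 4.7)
The plan is to construct, in the interior of each region $R_i$, a small ball on which $g_i$ acts as an isotropic expansion carrying the centre of that ball to the origin of $\R^{n_0}$ (the centre of $\mathcal{S}$). The shared bias $\vc$ functions as a single common affine shift, so the ``centre to origin'' constraint couples the choice of $\vc$ and of each $\mU_i$; the whole construction can be carried out by first fixing $\vc$ of sufficiently large norm and then reading off each $\mU_i$ as a suitable scaled rotation.

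First, using that each $R_i$ has non-empty interior, pick $\vx_i\in \mathrm{int}(R_i)$ and $\epsilon_i>0$ with $B(\vx_i,\epsilon_i)\subseteq R_i$; by perturbing $\vx_i$ slightly if necessary we may assume $\vx_i\neq 0$. Next fix any $\vc\in\R^{n_0}$ whose norm satisfies
\[
\|\vc\| \;\geq\; \max_{i\in[p]} \frac{\|\vx_i\|}{\epsilon_i}.
\]
This single inequality is what ties the choice of the common bias to the data $(\vx_i,\epsilon_i)_{i\in[p]}$.

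For each $i$, set $m_i := \|\vc\|/\|\vx_i\|$, so that $m_i \geq 1/\epsilon_i$, and let $\mathbf{R}_i$ be any orthogonal matrix sending $\vx_i/\|\vx_i\|$ to $-\vc/\|\vc\|$; such an $\mathbf{R}_i$ exists because both vectors are unit vectors in $\R^{n_0}$. Put $\mU_i := m_i \mathbf{R}_i$. Then $\mU_i \vx_i = -\vc$, every singular value of $\mU_i$ equals $m_i$, and for any $\vh$ with $\|\vh\|\leq\epsilon_i$ one computes
\[
g_i(\vx_i+\vh) \;=\; \mU_i \vx_i + \mU_i \vh + \vc \;=\; m_i\,\mathbf{R}_i \vh.
\]
Since $\mathbf{R}_i$ is an isometry, $g_i(B(\vx_i,\epsilon_i)) = B(0, m_i\epsilon_i) \supseteq B(0,1) = \mathcal{S}$, and because $B(\vx_i,\epsilon_i)\subseteq R_i$ we obtain $g_i(R_i) \supseteq \mathcal{S}$ for every $i\in[p]$.

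No step looks like a real obstacle: the proposition reduces to the elementary fact that a single scaled rotation can send any given non-zero vector to any other non-zero vector of any prescribed magnitude, which is precisely what lets us reconcile a single shared $\vc$ with the requirement that each $\mU_i$ be isotropically expansive. The only delicate point is the calibration of $\|\vc\|$ against the chosen pairs $(\vx_i,\epsilon_i)$, which is why it is fixed \emph{after} the interior balls are chosen but \emph{before} the matrices $\mU_i$ are defined.
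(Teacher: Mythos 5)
Your proof is correct, and it secures the shared bias in a different way than the paper does. The paper's own argument also starts by inscribing a ball $\mathcal{S}_i$ of radius $r_i$ and center $\mathbf{s}_i$ inside each $R_i$, but then uses \emph{diagonal} matrices: it picks positive numbers $u_{ij}$ with $u_{ij}(\mathbf{s}_i)_j = u_{1j}(\mathbf{s}_1)_j$ for every coordinate $j$, sets $\mU_i = \operatorname{diag}(\eta u_{i1},\ldots,\eta u_{in_0})$ with $\eta$ large enough that $r_i\eta u_{ij} > 1$, so that all scaled centers coincide, and finally takes $\vc$ to be minus that common center. You instead fix $\vc$ of large norm \emph{first} and realize each $\mU_i$ as a scaled rotation $m_i\mathbf{R}_i$ carrying the center $\vx_i$ exactly to $-\vc$. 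Your route buys two things. First, conformal maps send balls to balls, so a single radius inequality $m_i\epsilon_i \geq 1$ finishes the containment, whereas the diagonal maps produce ellipsoids and the paper must argue about the minor radius of the images. Second, you quietly sidestep a lacuna in the paper's construction: positive solutions $u_{ij}$ of $u_{ij}(\mathbf{s}_i)_j = u_{1j}(\mathbf{s}_1)_j$ exist only if each coordinate $(\mathbf{s}_i)_j$ is nonzero and has the same sign for all $i$; this is fixable by perturbing the centers and dropping the positivity requirement, but your orthogonal matrices make the issue moot, at the cost of giving up the diagonal structure of the $\mU_i$ (which nothing downstream, in particular the proof of Theorem~\ref{theorem:klayermodel}, actually uses). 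One cosmetic nit: to contain the \emph{closed} unit ball cleanly, take the strict inequality $\|\vc\| > \max_{i}\|\vx_i\|/\epsilon_i$ (mirroring the paper's strict $r_i\eta u_{ij}>1$), or note that closed balls $B(\vx_i,\epsilon_i)$ fit inside the open interiors.
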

\begin{proof}
 To see this, consider the following construction.  For each region $R_i$
 consider a ball $\mathcal{S}_i\subseteq R_i$ of radius $r_i\in\R_+$ and center
 $\mathbf{s}_i = (\vs_{i 1},\ldots, \vs_{i n_0})\in \R^{n_0}$.  
 For each $j=1,\ldots, n_0$, consider $p$ positive numbers $u_{1j},
 \ldots , u_{p j}$ such that $u_{ij} {\mathbf{s}_i}_j = u_{kj}
 {\mathbf{s}_k}_j$ for all $1 \leq k < i \leq p$.  This can be done fixing
 $u_{1j}$ equal to $1$ and solving the equation for all other numbers.  Let
 $\eta\in\R$ be such that $r_i \eta u_{ij}> 1$ for any $j$ and $i$.  Scaling
 each region $R_i$ by $\mathbf{U}_i = \operatorname{diag}( \eta u_{i0}, \ldots,
 \eta u_{i n_0})$ transforms the center of $\mathcal{S}_i$ to the same point
 for all $i$.  By the choice of $\eta$, the minor radius of all transformed
 balls is larger than $1$. 
 
 We can now set $\mathbf{c}$ to be minus the common center of the scaled balls,
 to obtain the map:
  \begin{equation*}
      g_i(\mathbf{x}) = \text{diag}\left(\eta u_{i1}, \ldots,\eta u_{i n_0} \right) \mathbf{x} -  
      \text{diag}\left(\eta u_{11}, \ldots, \eta u_{1 n_0}\right) \mathbf{s}_1, \quad \text{ for all $1\leq i\leq p$} . 
  \end{equation*}

These $g_i$ satisfy claimed property, namely that $g_i(R_i)$ contains the unit ball, for all $i$. 
\end{proof}

\begin{figure}
\centering
\includegraphics[clip=true,trim=6cm 17.5cm 5.5cm 4.5cm]{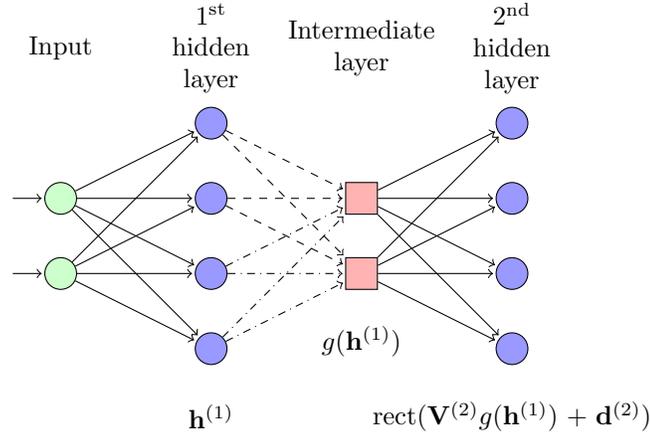}
\caption{Illustration of Example~\ref{example:klayermodel}. 
        The units represented by squares build an intermediary layer of linear
        units between the first and the second hidden layers.  The computation
        of such an intermediary linear layer can be absorbed in the second
        hidden layer of rectifier units (Lemma~\ref{lemma:decomposition}).  The
        connectivity map depicts the maps $g_1$ by dashed arrows and $g_2$ by
        dashed-dotted arrows. 
}
    \label{fig:depiction_interlayer}
\end{figure}

Before proceeding, we discuss an example illustrating how the previous
propositions and lemmas are put together to prove our main result below, in
Theorem~\ref{theorem:klayermodel}. 

\begin{example} 
\label{example:klayermodel}
Consider a rectifier MLP with $n_0=2$, such that the input space is $\R^2$, and
assume that the network has only two hidden layers, each consisting of $n=2n'$
units.  Each unit in the first hidden layer defines a hyperplane in $\R^2$,
namely the hyperplane that separates the inputs for which it is active, from
the inputs for which it is not active.  Hence the first hidden layer defines an
arrangement of $n$ hyperplanes in $\R^2$.  By
Proposition~\ref{proposition:special_arrangement}, this arrangement can be made
such that it delimits regions of inputs $R_1$, \ldots, $R_{n'}\subseteq\R^2$
with the following property.  For each input in any given one of these regions,
exactly one pair of units in the first hidden layer is active, and,
furthermore, the pairs of units that are active on different regions are
disjoint. 

By the definition of rectifier units, each hidden unit computes a linear
function within the half-space of inputs where it is active.  In turn, the
image of $R_i$ by the pair of units that is active in $R_i$ is a polyhedron in
$\R^2$.  For each region $R_i$, denote corresponding polyhedron by $S_i$.

Recall that a rectifier layer computes a map of the form $f \colon
\R^{n}\to\R^{m};\, \vx \mapsto \operatorname{rect}(\mW \vx +\vb)$.  Hence a
rectifier layer with $n$ inputs and $m$ outputs can compute any composition
$f'\circ g$ of an affine map $g\colon \R^n\to\R^{k}$ and a map $f'$ computed by
a rectifier layer with $k$ inputs and $m$ outputs
(Lemma~\ref{lemma:decomposition}). 

Consider the map computed by the rectifier units in the second hidden layer,
i.e., the map that takes activations from the first hidden layer and outputs
activations from the second hidden layer.  We think of this map as a
composition $f'\circ g$ of an affine map $g\colon\R^n \to \R^2$ and a map $f'$
computed by a rectifier layer with $2$ inputs.  The map $g$ can be interpreted
as an intermediary layer consisting of two linear units, as illustrated in
Fig.~\ref{fig:depiction_interlayer}. 

Within each input region $R_i$, only two units in the first hidden layer are
active.  Therefore, for each input region $R_i$, the output of the intermediary
layer is an affine transformation of $S_i$.  Furthermore, the weights of the
intermediary layer can be chosen in such a way that the image of each $R_i$
contains the unit ball.

Now, $f'$ is the map computed by a rectifier layer with $2$ inputs and $n$
outputs.  It is possible to define this map in such a way that it has $\Rcal$
regions of linearity within the unit ball, where $\Rcal$ is the number of
regions of a $2$-dimensional arrangement of $n$ hyperplanes in general
position. 

We see that the entire network computes a function which has $\Rcal$ regions of
linearity within each one of the input regions $R_1,\ldots, R_{n'}$.  Each
input region $R_i$ is mapped by the concatenation of first and intermediate
(notional) layer to a subset of $\R^2$ which contains the unit ball.  Then, the
second layer computes a function which partitions the unit ball into many
pieces.  The partition computed by the second layer gets replicated in each of
the input regions $R_i$, resulting in a subdivision of the input space in
exponentially many pieces (exponential in the number of network layers).
\end{example}

Now we are ready to state our main result on the number of response regions of rectifier deep feedforward networks: 

 \begin{theorem}\label{theorem:klayermodel}
     A model with $n_0$ inputs and $k$ hidden layers of widths $n_1, n_2, \ldots, n_k$
     can divide the input space in $\left(\prod_{i=1}^{k-1}\left\lfloor\frac{n_i}{n_0}\right\rfloor\right)
      \sum_{i=0}^{n_0} {n_k \choose i} $ or possibly more regions.
 \end{theorem}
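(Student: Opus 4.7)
The plan is to construct, by induction on the number of layers, a rectifier network achieving the claimed count; the construction generalizes the two-layer case worked out in Example~\ref{example:klayermodel}. The core idea is to use the first $k-1$ layers as a sequence of many-to-one ``identification'' maps, each of which folds $\lfloor n_\ell / n_0 \rfloor$ disjoint input regions onto a common target region, and then to let the last layer create as many regions as possible inside that common target; every region of the last layer then lifts back to $\prod_{\ell=1}^{k-1}\lfloor n_\ell/n_0\rfloor$ distinct regions of the overall input.

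First I would handle a single ``folding'' layer. Group the $n_\ell$ units of hidden layer $\ell < k$ into $p_\ell := \lfloor n_\ell / n_0 \rfloor$ disjoint blocks of $n_0$ units each (ignoring any leftover units, which may be set to zero output). Within each block, invoke Proposition~\ref{proposition:special_arrangement} to choose weights and biases so that among the $n_\ell$ arrangement hyperplanes there exist regions $R_1^{(\ell)},\ldots,R_{p_\ell}^{(\ell)}$ on which exactly the units of block $j$ are active in $R_j^{(\ell)}$, and all other units of layer $\ell$ are inactive there. On each $R_j^{(\ell)}$, the rectifier layer then computes an affine map from $\R^{n_0}$ to $\R^{n_\ell}$ whose image lies in the $n_0$-dimensional coordinate subspace of block $j$. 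Using Lemma~\ref{lemma:decomposition}, absorb an intermediary linear layer of width $n_0$ into layer $\ell+1$; then apply Proposition~\ref{proposition:mini} to pick the linear weights so that all $p_\ell$ images $g_j(R_j^{(\ell)})$ contain the unit ball $\Scal \subset \R^{n_0}$.

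Next I would stitch the layers together inductively. Assume layers $1,\ldots,\ell-1$ have already been built so that, together, they produce at least $\prod_{i=1}^{\ell-1}\lfloor n_i/n_0\rfloor$ input regions, each of which is mapped onto a set containing the unit ball $\Scal$ in the input to layer $\ell$. Apply the folding construction above at layer $\ell$: by Proposition~\ref{proposition:arrangement_in_sphere} we may scale and shift the layer-$\ell$ arrangement so that its regions $R_1^{(\ell)},\ldots,R_{p_\ell}^{(\ell)}$ all sit inside $\Scal$. Combined with Lemma~\ref{lemma:nrregions}, each of the previously constructed input regions is then subdivided into at least $p_\ell$ new regions, and each of these is mapped onto a superset of $\Scal$ at the output of layer $\ell$ (by Proposition~\ref{proposition:mini} applied to the intermediary linear layer absorbed into layer $\ell+1$). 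Iterating from $\ell=1$ through $\ell=k-1$ yields $\prod_{i=1}^{k-1}\lfloor n_i/n_0\rfloor$ distinct input regions, each mapped onto a common subset containing $\Scal$ in the input to layer $k$. For layer $k$ itself, which is the last hidden layer, we do not need to fold: by Proposition~\ref{proposition:onelayermodel} (applied inside $\Scal$, using Proposition~\ref{proposition:arrangement_in_sphere} to fit a general-position arrangement of $n_k$ hyperplanes into the ball) the $n_k$ units produce $\sum_{i=0}^{n_0}\binom{n_k}{i}$ response regions within $\Scal$. Multiplying gives the announced bound.

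The main obstacle is verifying that the regions produced really are distinct connected components of the input-space partition of the full network. Two issues must be handled: (i) the identification at layer $\ell$ sends several disjoint input regions to the same activation pattern in the input of layer $\ell+1$, and one must check that distinct preimages of a single downstream region remain disconnected in the input space — this follows because the preimages lie inside distinct $R_j^{(\ell)}$, which are themselves separated by hyperplanes of layer $\ell$ on which rectifier outputs drop to zero, preventing them from being joined by any affine continuation; and (ii) one must ensure that the general-position and scale choices made at different layers are mutually compatible, which is achieved by making the choices sequentially from layer $1$ upward and invoking the stability of general-position arrangements under small perturbations (as already used in the proof of Proposition~\ref{proposition:special_arrangement}). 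Once these technical points are in place, Lemma~\ref{lemma:nrregions} applied layer by layer yields the multiplicative count, and Lemma~\ref{lemma:multiple_outputs} shows the number of output units does not affect the bound.
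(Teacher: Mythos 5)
Your proposal is correct and follows essentially the same route as the paper's proof: Proposition~\ref{proposition:special_arrangement} to carve out $\lfloor n_\ell/n_0\rfloor$ input regions with disjoint blocks of active units, a fictitious intermediary linear layer absorbed via Lemma~\ref{lemma:decomposition}, Proposition~\ref{proposition:mini} to map all folded regions onto a superset of the unit ball, and a general-position arrangement of $n_k$ hyperplanes at the last layer, multiplied through the layers. Your explicit induction and your attention to the disconnectedness of preimages are if anything slightly more careful than the paper's ``analyzed in a similar way'' treatment of layers beyond the second (note only that Proposition~\ref{proposition:arrangement_in_sphere} makes every region of the scaled arrangement \emph{intersect} the ball, not lie inside it, which is all your argument actually needs).
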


\begin{figure} 
    \bigskip
    \centering 
    \includegraphics[scale=.4]{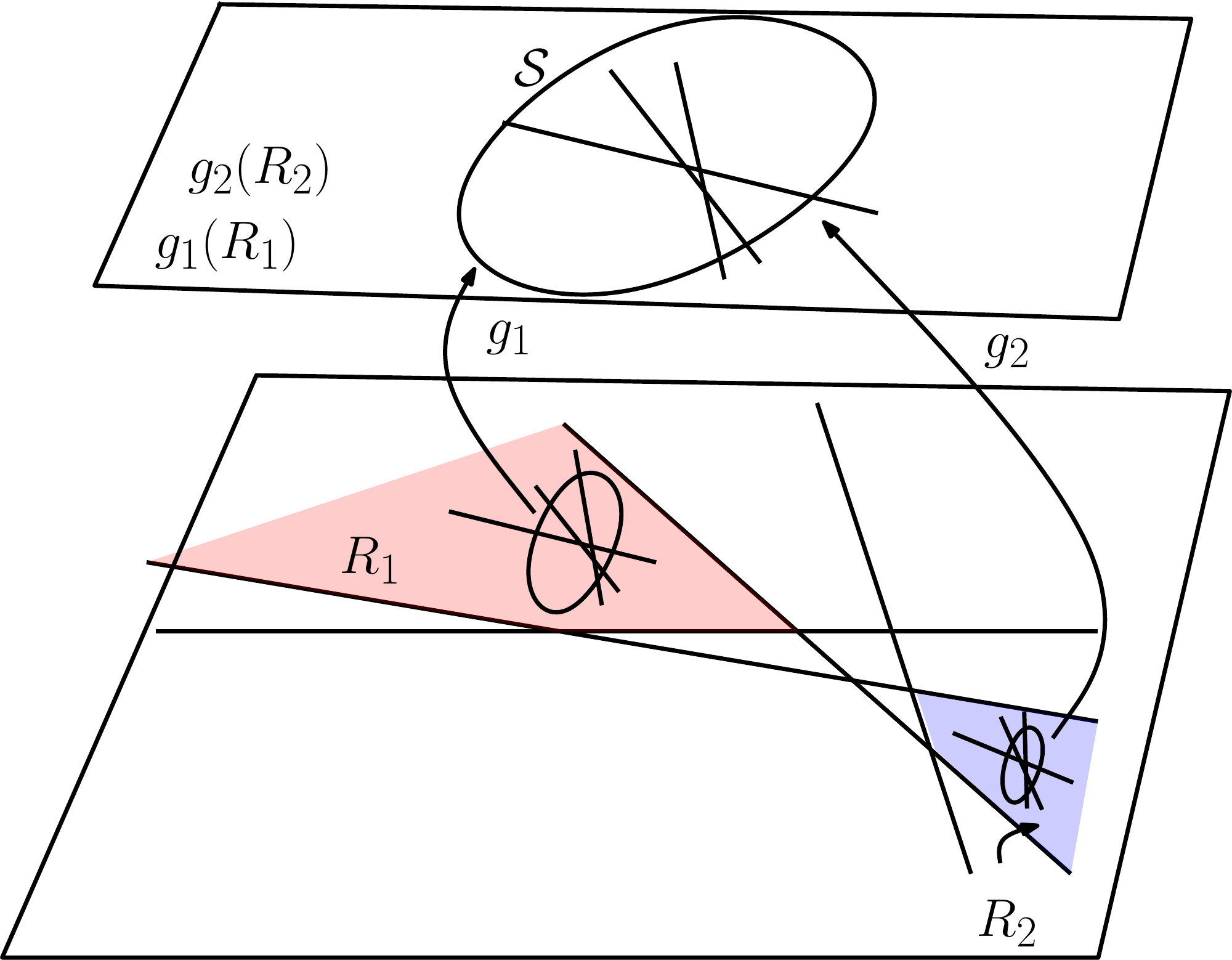} 
    \caption{Constructing $\left\lfloor \frac{n_1}{n_0}\right\rfloor
        \sum_{k=0}^{n_0} {n_2 \choose k}$ response regions in a model with two layers.
    }
    \label{fig:depiction_2layers}
\end{figure}

 \begin{proof}[Proof of Theorem~\ref{theorem:klayermodel}]
     Let the first hidden layer define an arrangement like the one from
     Proposition~\ref{proposition:special_arrangement}.  Then there are $p =
     \left\lfloor \frac{n_1}{n_0}\right\rfloor$ input-space regions
     $R_i\subseteq \R^{n_0}$, $i\in[p]$ with the following property.  For each
     input vector from the region $R_i$, exactly $n_0$ units from the first
     hidden layer are active.  We denote this set of units by $I_i$.
     Furthermore, by Proposition~\ref{proposition:special_arrangement}, for
     inputs in distinct regions $R_i$, the corresponding set of active units is
     disjoint; that is, $I_i\cap I_j=\emptyset$ for all $ i, j\in [p]$, $i\neq
     j$.  
     
     To be more specific, for an input vectors from $R_1$, exactly the first
     $n_0$ units of the first hidden layer are active, that is, for these input
     vectors the value of $\vh^{(1)}_j$ is non-zero if and only if $j\in I_1=
     \{1,\ldots, n_0\}$.  For input vectors from $R_2$, only the next $n_0$
     units of the first hidden layer are active, that is, the units with index
     in $I_2=\{n_0+1,\ldots, 2 n_0  \}$, and so on. 

Now we consider a `fictitious' intermediary layer consisting of $n_0$ linear
units between the first and second hidden layers.  As this intermediary layer
computes an affine function, it can be absorbed into the second hidden layer
(see Lemma~\ref{lemma:decomposition}).  We use it only for making the next
arguments clearer. 

The map taking activations from the first hidden layer to activations from the
second hidden layer is $\operatorname{rect}(\mathbf{W}^{(2)} \vx +\vb^{(2)})$,
where $\mathbf{W}^{(2)} \in \R^{n_2 \times n_1}, \mathbf{b}^{(2)} \in
\R^{n_2}$. 

We can write the input and bias weight matrices as $\mathbf{W}^{(2)} =
\mathbf{U}^{(2)} \mathbf{V}^{(2)}$ and $\mathbf{b}^{(2)} = \mathbf{d}^{(2)} +
\mathbf{V}^{(2)} \mathbf{c}^{(2)}$, where $\mathbf{U}^{(2)} \in \R^{n_0 \times
n_1}$, $\mathbf{c}\in \R^{n_0}$, and $\mathbf{V}^{(2)} \in \R^{n_2 \times
n_0}$, $\mathbf{d}\in \R^{n_2}$. 

The weights $\mathbf{U}^{(2)}$ and $\mathbf{c}^{(2)}$ describe the affine
function computed by the intermediary layer, $\vx \mapsto \mathbf{U}^{(2)}\vx +
\mathbf{c}$.  The weights $\mathbf{V}^{(2)}$ and $\mathbf{d}^{(2)}$ are the
input and bias weights of the rectifier layer following the intermediary layer. 

We now consider the sub-matrix $\mU^{(2)}_i$ of $\mU^{(2)}$ consisting of the
columns of $\mU^{(2)}$ with indices $I_i$, for all $i\in[p]$.  Then $\mU^{(2)}
= \left[ \mU^{(2)}_1 | \cdots | \mU^{(2)}_{p} |  \tilde{\mU}^{(2)}  \right]$,
where $\tilde{\mU}^{(2)}$ is the sub-matrix of ${\mU}^{(2)}$ consisting of its
last $n_1-p n_0$ columns. In the sequel we set all entries of
$\tilde{\mU}^{(2)}$ equal to zero. 

The map $g:\R^{n_1}\to \R^{n_0};\, g(\mathbf{x}) = \mathbf{U}^{(2)} \mathbf{x}
+ \mathbf{c}^{(2)}$ is thus written as the sum $g=\sum_{i\in[p]} g_i +
\vc^{(2)}$, where $g_i: \R^{n_0} \to \R^{n_0}; \,  g_i(\mathbf{x}) =
\mathbf{U}^{(2)}_i \mathbf{x}$, for all $i\in[p]$. 

Let $S_i$ be the image of the input-space region $R_i$ by the first hidden
layer.  By Proposition~\ref{proposition:mini}, there is a choice of the weights
$\mathbf{U}^{(2)}_i$ and bias $\mathbf{c}^{(2)}$ such that the image of $S_i$
by $\mathbf{x} \to \mU^{(2)}_i(\mathbf{x}) + \mathbf{c}^{(2)}$ contains the
$n_0$-dimensional unit ball.  Now, for all inputs vectors from $R_i$, only the
units $I_i$ of the first hidden layer are active.  Therefore, $g|_{R_i} =
g_i|_{R_i} + \vc^{(2)}$.  This implies that the image $g(R_i)$ of the
input-space region $R_i$ by the intermediary layer contains the unit ball, for
all $i\in[p]$. 

We can now choose $\mathbf{V}^{(2)}$ and $\mathbf{d}^{(2)}$ in such a way that
the rectifier function $\R^{n_0}\to \R^{n_2};\, \vy \mapsto
\operatorname{rect}( \mV^{(2)} \vy + \mathbf{d}^{(2)} )$ defines an arrangement
$\Acal$ of $n_2$ hyperplanes with the property that each region of $\Acal$
intersects the unit ball at an open neighborhood. 

In consequence, the map from input-space to activations of the second hidden
layer has $r(\Acal)$ regions of linearity within each input-space region $R_i$.
Fig.~\ref{fig:depiction_2layers} illustrates the situation.  All inputs that
are mapped to the same activation of the first hidden layer, are treated as
equivalent on the subsequent layers.  In this sense, an arrangement $\Acal$
defined on the set of common outputs of $R_1, \ldots, R_p$ at the first hidden
layer, is `replicated' in each input region $R_1,\ldots, R_p$.

    \begin{figure} 
    \bigskip
    \centering 
    \includegraphics[scale=.4]{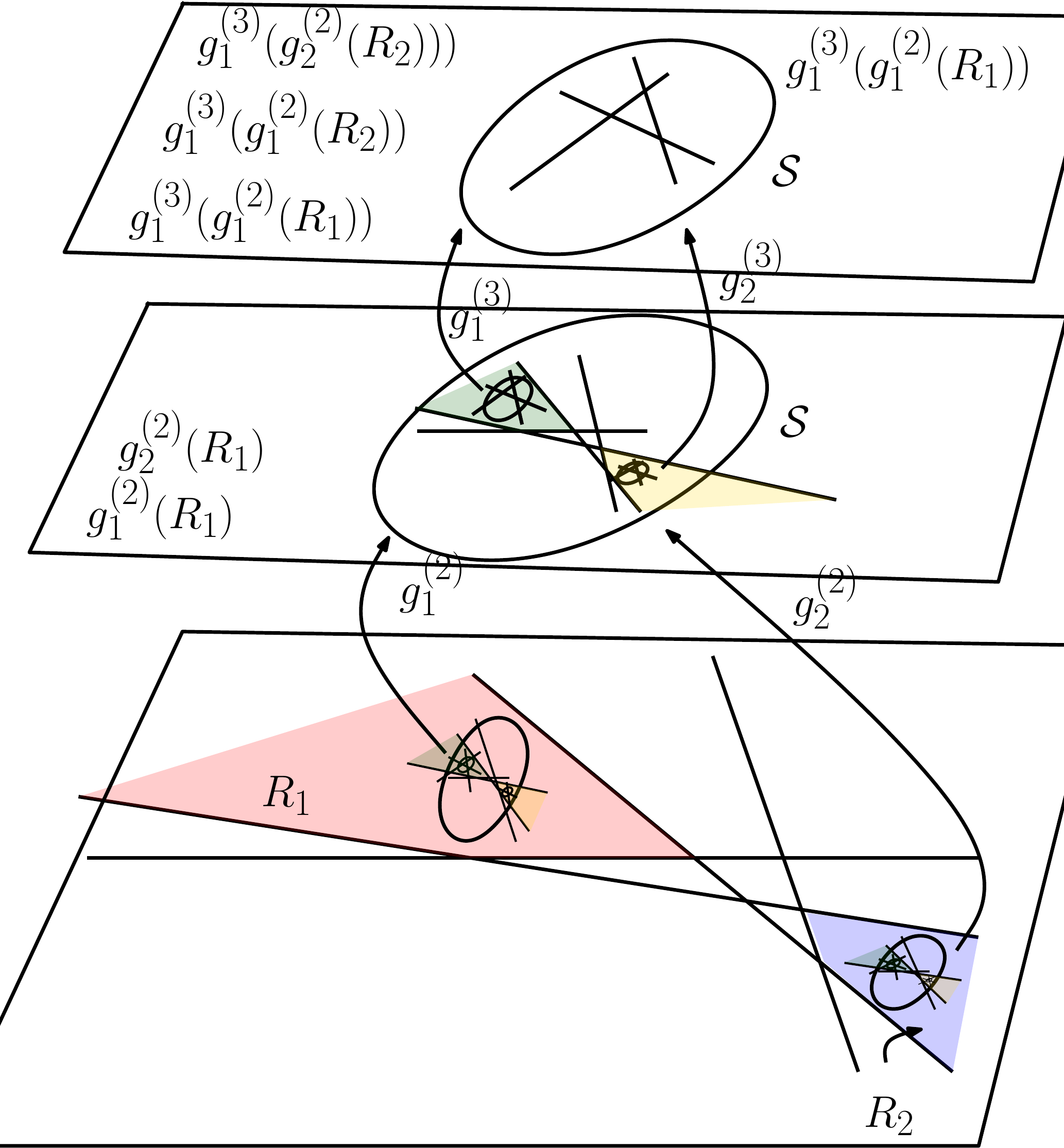} 
    \caption{Constructing $ \left\lfloor
        \frac{n_2}{n_0}\right\rfloor \left\lfloor
        \frac{n_1}{n_0}\right\rfloor \sum_{k=0}^{n_0} {n_3 \choose
        k} $ response regions in a model with three layers. 
    }
    \label{fig:depiction_nlayers}
\end{figure}

The subsequent layers of the network can be analyzed in a similar way as done
above for the first two layers. 
In particular, the weights $\mathbf{V}^{(2)}$ and $\mathbf{d}^{(2)}$ can be
chosen in such a way that they define an arrangement with the properties from
Proposition~\ref{proposition:special_arrangement}.  Then, the map taking
activations from the second hidden layer to activations from the third hidden
layer, can be analyzed by considering again a fictitious intermediary layer
between the second and third layers, and so forth, as done above. 

For the last hidden layer we choose the input weights $\mathbf{V}^{(k)}$ and
bias $\mathbf{d}^{(k)}$ defining an $n_0$-dimensional arrangement of $n_k$
hyperplanes in general position.  The map of inputs to activations of the last
hidden layer has thus
$\left(\prod_{i=1}^{k-1}\left\lfloor\frac{n_i}{n_0}\right\rfloor\right)
\sum_{i=0}^{n_0} {n_k \choose i} $ regions of linearity. 
This number is a lower bound on the maximal number of regions of linearity of
functions computable by the network. This completes the proof.  The intuition
of the construction is illustrated in Fig.~\ref{fig:depiction_nlayers}. 
 \end{proof}

In the Appendix~\ref{section:asymptotic} we derive an asymptotic expansion of
the bound given in Theorem~\ref{theorem:klayermodel}. 
 
 \section{A special class of deep models}
 \label{sec:opt1}
In this section we consider deep rectifier models with $n_0$ input units and
hidden layers of width $n = 2n_0$.  This restriction allows us to construct a
very efficient deep model in terms of number of response regions.  The analysis
that we provide in this section complements the results from the previous
section, showing that rectifier MLPs can compute functions with many response
regions, even when defined with relatively few hidden layers.

 \begin{example}\label{example:illustration}
 Let us assume we have a $2$-dimensional input, i.e., $n_0=2$, and a layer of
 $n=4$ rectifiers $f_1$, $f_2$, $f_3$, and $f_4$, followed by a linear
 projection.  We construct the rectifier layer in such a way that it divides
 the input space into four `square' cones; each of them corresponding to the
 inputs where two of the rectifier units are active.  We define the four
 rectifiers as:
 \begin{align*}
     f_1 (\vx) =& \max\left\{0, \left[ 1, 0 \right]^\top \vx\right\}, \\
     f_2 (\vx) =& \max\left\{0, \left[-1, 0 \right]^\top \vx\right\}, \\
     f_3 (\vx) =& \max\left\{0, \left[ 0, 1 \right]^\top \vx\right\}, \\
     f_4 (\vx) =& \max\left\{0, \left[ 0,-1 \right]^\top \vx\right\},
 \end{align*}
 where $\vx = \left[ x_1, x_2\right]^\top \in \RR^{n_0}$. 
 By adding pairs of coordinates of $\vf = \left[f_1,
 f_2, f_3, f_4 \right]^\top$, we can effectively mimic a layer
 consisting of two absolute-value units $g_1$ and $g_2$:
 \begin{align}
 \label{eq:comb1}
     \left[ 
         \begin{array}{c}
             g_1(\vx) \\
             g_2(\vx)
         \end{array}
     \right]
     = 
     \left[
         \begin{array}{c c c c}
             1 & 1 & 0 & 0 \\
             0 & 0 & 1 & 1
         \end{array}
     \right]
     \left[ 
         \begin{array}{c}
             f_1(\vx) \\
             f_2(\vx) \\ 
             f_3(\vx) \\ 
             f_4(\vx) 
         \end{array}
     \right]
     =
     \left[
         \begin{array}{c}
             \abs(x_1) \\
             \abs(x_2)
         \end{array}
     \right]. 
 \end{align}
 
The absolute-value unit $g_i$ divides the input space along the $i$-th coordinate axis, taking values which are symmetric about that axis. 
The combination of $g_1$ and $g_2$ is then a function with four regions of linearity; 
 \begin{align*}
     \Scal_1 =& \left\{ (x_1, x_2) \mid x_1 \geq 0, x_2 \geq 0 \right\} \\
     \Scal_2 =& \left\{ (x_1, x_2) \mid x_1 \geq 0, x_2 < 0 \right\} \\
     \Scal_3 =& \left\{ (x_1, x_2) \mid x_1 < 0, x_2 \geq 0 \right\} \\
     \Scal_4 =& \left\{ (x_1, x_2) \mid x_1 < 0, x_2 < 0 \right\}.
 \end{align*}
 
Since the values of $g_i$ are symmetric about the $i$-th coordinate axis, 
each point $\vx \in \Scal_i$ has a corresponding point $\vy \in \Scal_j$ with
$\vg(\vx)= \vg(\vy)$, for all $i$ and $j$.

 \begin{figure}[ht]
     \centering
     \begin{minipage}[t]{0.49\textwidth}
         \centering
         \includegraphics[width=0.9\columnwidth]{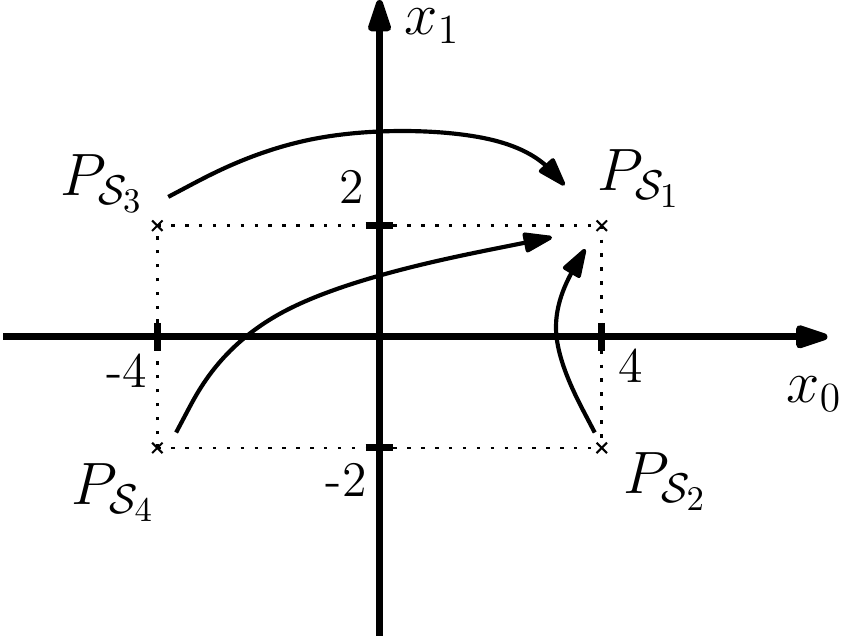}
 
         (a) 
     \end{minipage}
     \hfill
     \begin{minipage}[t]{0.49\textwidth}
         \centering
         \includegraphics[width=0.9\columnwidth]{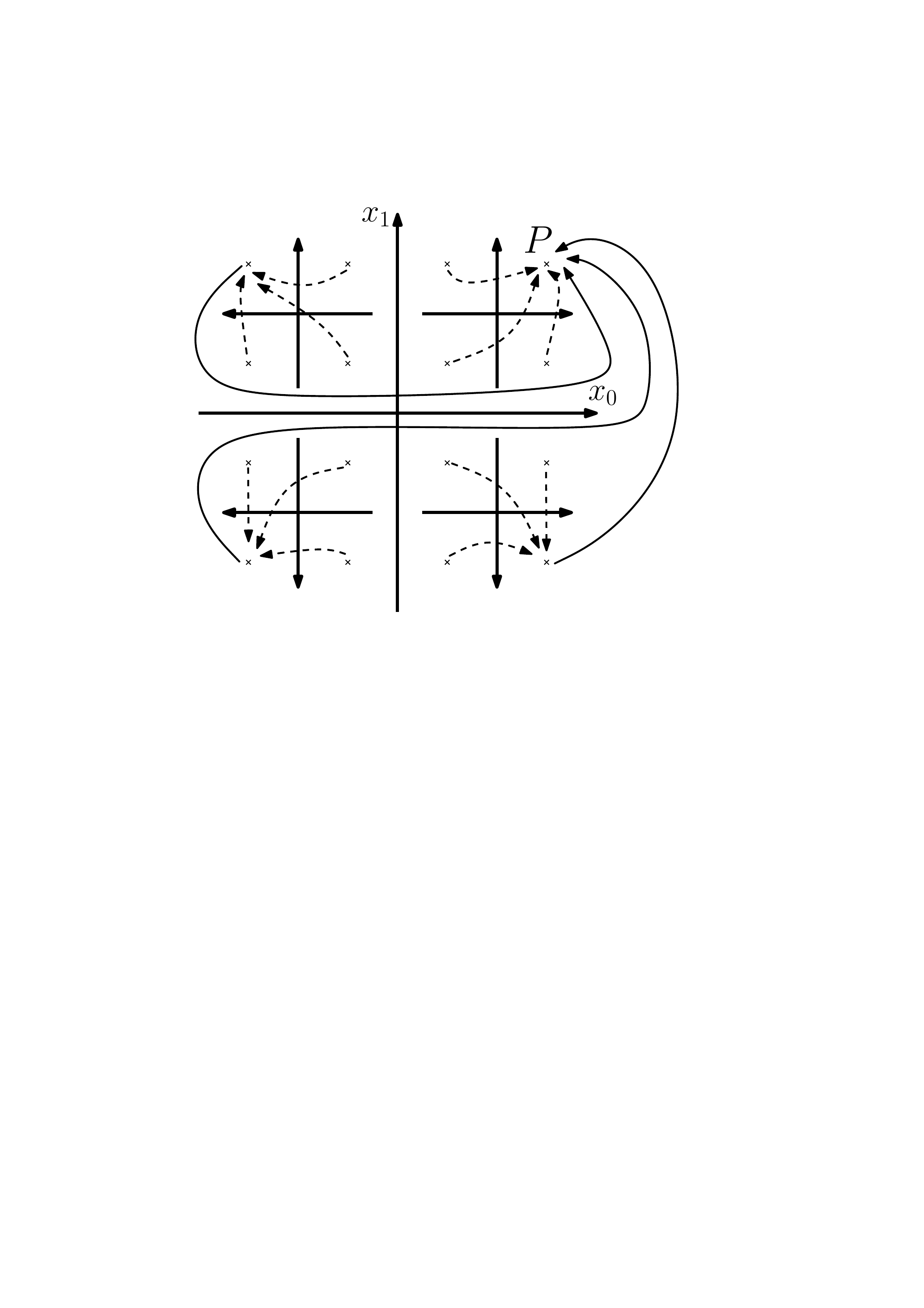}
 
         (b)
     \end{minipage}
     \caption{Illustration of Example~\ref{example:illustration}. 
     (a) A rectifier layer with two pairs of units, where each pair computes the absolute value of one of two input coordinates. 
         Each input quadrant is mapped to the positive quadrant. 
         (b) Depiction of a two layer model. Both layers
         simulate the absolute value of their input coordinates. 
}
     \label{fig:simple}
 \end{figure}
 
 We can apply the same procedure to the image of $\left[ g_1, g_2 \right]$ to
 recursively divide the input space, as illustrated in Fig.~\ref{fig:simple}.
 For instance, if we apply this procedure one more time, we get four regions
 within each $\Scal_i$, resulting in $16$ regions in total, within the input
 space.  On the last layer, we may place rectifiers in any way suitable for the
 task of interest (e.g., classification).  The partition computed by the last
 layer will be copied to each of the input space regions that produced the same
 input for the last layer.  Fig.~\ref{fig:unit_cube} shows a function that can
 be implemented efficiently by a deep model using the previous observations. 
 
 \begin{figure}[ht]
     \centering
     \hspace{-.03\textwidth}
     \begin{minipage}[t]{0.32\textwidth}
         \centering
         \includegraphics[clip=true, trim=17cm 4cm 17cm 4cm, width=1.\textwidth]{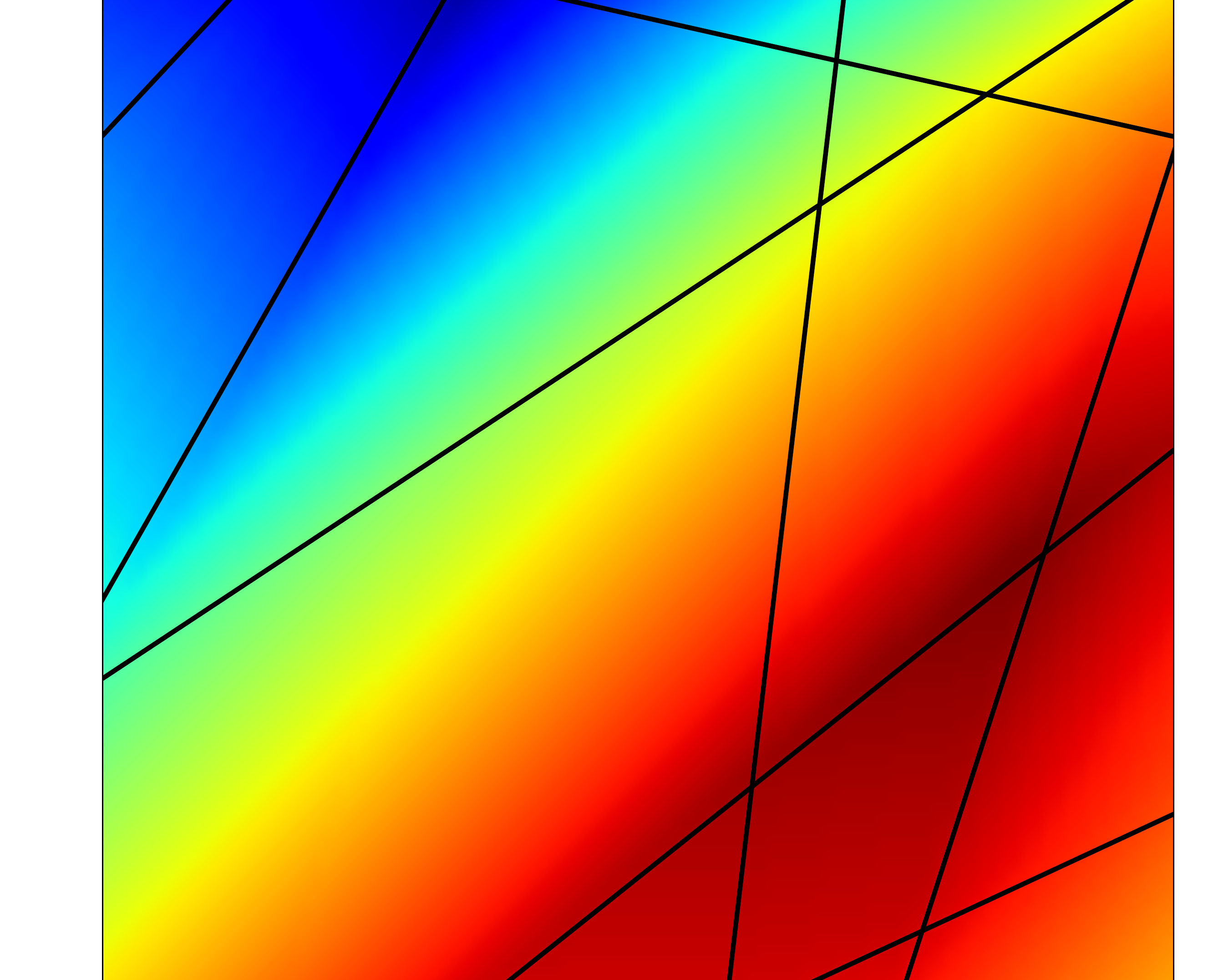}
         
         (a) 
     \end{minipage}
     \begin{minipage}[t]{0.32\textwidth}
         \centering
         \includegraphics[clip=true, trim=17cm 4cm 17cm 4cm, width=1.\textwidth]{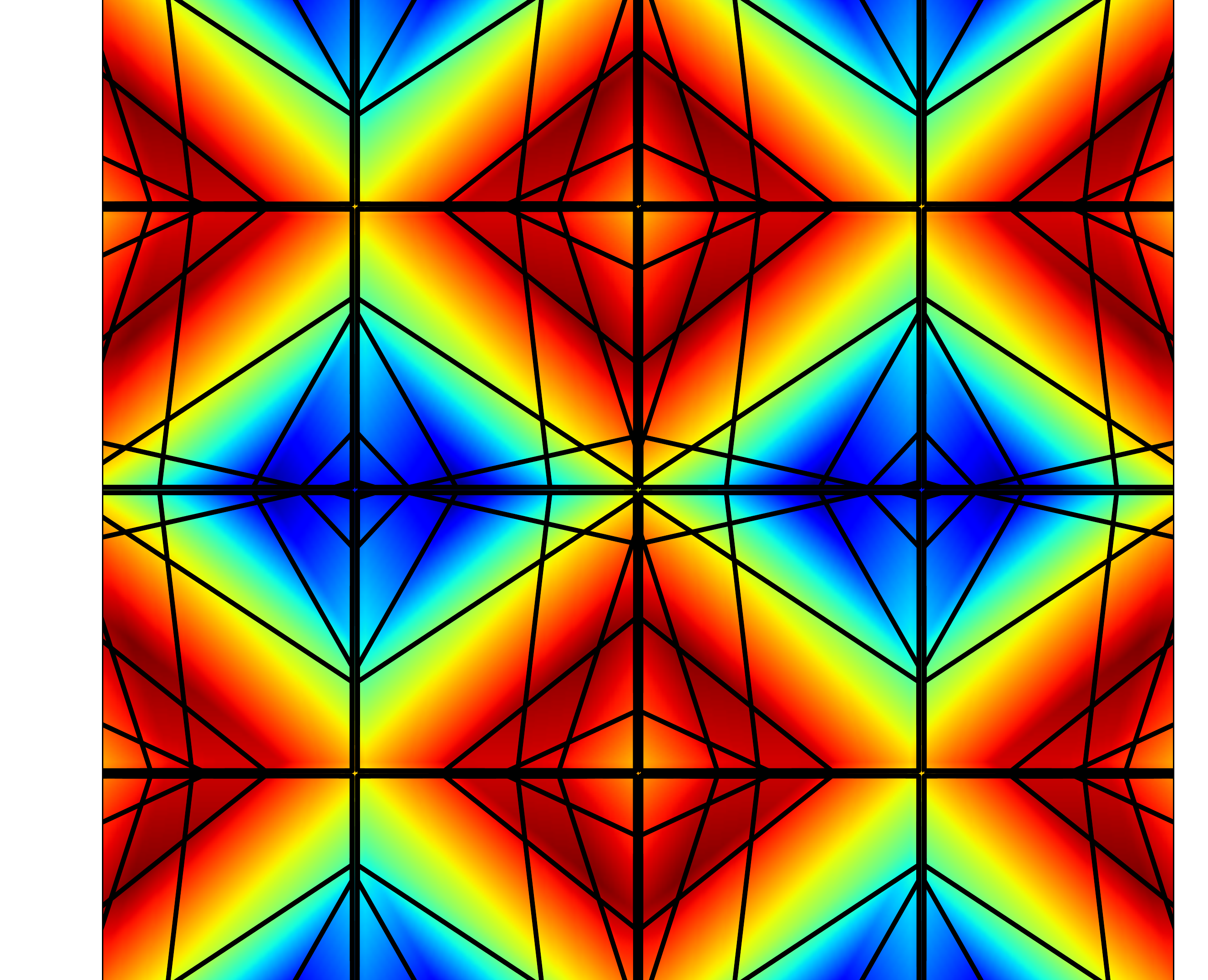}
         
         (b)
     \end{minipage}
     \begin{minipage}[t]{0.32\textwidth}
         \centering
         \includegraphics[clip=true, trim=17cm 4cm 17cm 4cm, width=1.\textwidth]{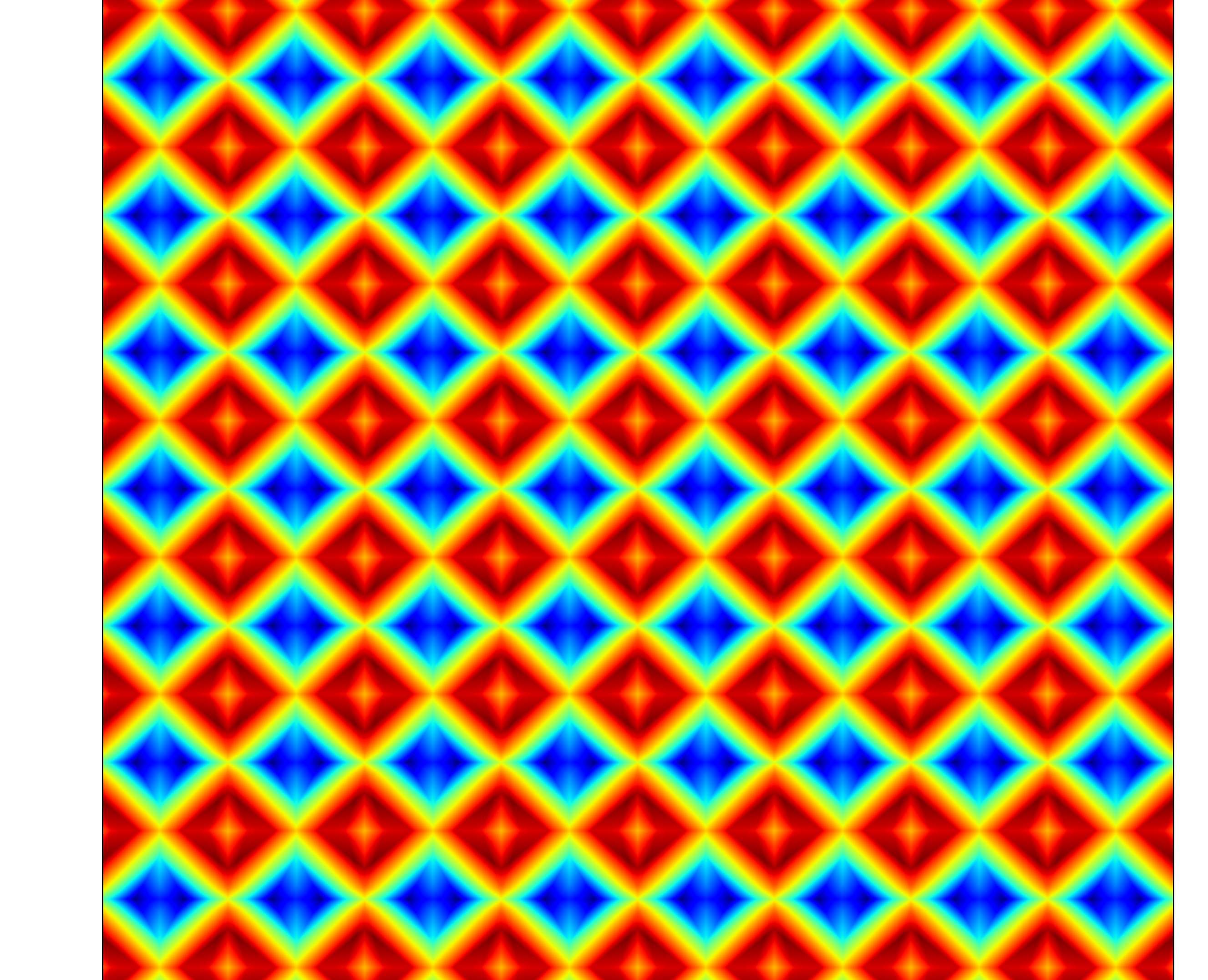}
         
         (c)
     \end{minipage}
     \caption{(a)
     Illustration of the partition computed by $8$ rectifier units on the outputs $(x_1,x_2)$ of the preceding layer. 
     The color is a heat map of $x_1 - x_2$. 
      (b) 
      Heat map of a function computed by a rectifier network with $2$ inputs,  $2$ hidden layers of width $4$, and one linear output unit. 
      The black lines delimit the regions of linearity of the function. 
      (c) 
     Heat map of a function computed by a $4$ layer model with a total of $24$ hidden units.   
     It takes at least $137$ hidden units on a shallow model to represent the same function.}
     \label{fig:unit_cube}
 \end{figure}
 
 \end{example}

 The foregoing discussion can be easily generalized to $n_0 > 2$ input variables and $k$ hidden layers, each consisting of $2n_0$ rectifiers. 
 In that case, the maximal number of linear regions of functions computable by the network is lower-bounded as follows. 
 
 \begin{theorem} 
     \label{thm:opt1}
     The maximal number of regions of linearity of functions computable by a
     rectifier neural network with $n_0$ input variables and $k$ hidden layers
     of width $2n_0$ is at least $2^{(k-1)n_0} \sum_{j=0}^{n_0} { 2n_0 \choose j }$.
 \end{theorem}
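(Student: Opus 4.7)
The plan is to realize each of the first $k-1$ hidden layers as an absolute-value fold along every input coordinate, generalizing Example~\ref{example:illustration} from $n_0=2$ to arbitrary $n_0$, and then to use the last hidden layer to realize an $n_0$-dimensional hyperplane arrangement in general position all of whose regions lie in the common image of these folds. Multiplying the $2^{n_0}$-fold identification produced by each of the first $k-1$ layers by the region count of the last-layer arrangement will yield the claimed bound.

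In each layer $\ell=1,\ldots,k-1$, I would organize the $2n_0$ rectifier units into $n_0$ pairs, the $i$-th pair computing $\max\{0, x_i\}$ and $\max\{0,-x_i\}$, where $\vx=(x_1,\ldots,x_{n_0})$ is the input to that layer. Since $\max\{0,x_i\}+\max\{0,-x_i\}=|x_i|$, an affine combination of the $2n_0$ rectifier outputs produces the fold map $\phi\colon\vx\mapsto(|x_1|,\ldots,|x_{n_0}|)$, and by Lemma~\ref{lemma:decomposition} this affine combination can be absorbed into the rectifier layer that follows. Hence the effective layer-to-layer map over the first $k-1$ layers is the coordinatewise absolute value $\phi$. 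This $\phi$ sends each of the $2^{n_0}$ open orthants of $\RR^{n_0}$ bijectively onto the open positive orthant by a product of coordinate reflections, so the composition $\phi^{k-1}$ partitions $\RR^{n_0}$ into $2^{(k-1)n_0}$ open cells, on each of which $\phi^{k-1}$ is an affine bijection onto the open positive orthant with pairwise distinct linear parts.

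For the final (i.e.\ $k$-th) hidden layer, I would choose its $2n_0$ weight vectors and biases so that the associated arrangement $\Acal$ of $2n_0$ hyperplanes in $\RR^{n_0}$ is in general position, giving $r(\Acal)=\sum_{j=0}^{n_0}\binom{2n_0}{j}$ regions by Proposition~\ref{proposition:genericarrangement}. Using Proposition~\ref{proposition:arrangement_in_sphere}, I would then scale and translate $\Acal$ so that all of its regions intersect a small open ball $\mathcal{B}$ sitting inside the positive orthant. Since $\phi^{k-1}$ maps each of the $2^{(k-1)n_0}$ cells affinely and bijectively onto a set containing $\mathcal{B}$, the preimage of each of the $r(\Acal)$ regions yields a distinct linear piece inside every cell, for a total of at least $2^{(k-1)n_0}\sum_{j=0}^{n_0}\binom{2n_0}{j}$ response regions of the whole network.

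The main point to verify is that linear pieces coming from different cells of $\phi^{k-1}$ really are distinct response regions of the composed function, rather than being glued across cell boundaries. This rests on the fact that the linear parts of $\phi^{k-1}$ on neighboring cells differ by a coordinate reflection, so the composed network function has a genuine gradient discontinuity across every cell boundary as long as the final-layer weights are chosen generically (which is compatible with the general-position condition on $\Acal$). Checking that the fold boundaries and the last-layer arrangement cannot conspire to merge regions is, I expect, the main but essentially routine obstacle.
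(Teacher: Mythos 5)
There is a genuine gap at the heart of your construction: the coordinatewise absolute-value map $\phi\colon \vx \mapsto (|x_1|,\ldots,|x_{n_0}|)$ is \emph{idempotent}. Its image is contained in the closed nonnegative orthant, and on that orthant $\phi$ acts as the identity, so $\phi\circ\phi=\phi$ and more generally $\phi^{k-1}=\phi$. Consequently your claim that ``$\phi^{k-1}$ partitions $\RR^{n_0}$ into $2^{(k-1)n_0}$ open cells, on each of which $\phi^{k-1}$ is an affine bijection onto the open positive orthant'' is false for every $k\geq 3$: after the first layer, every subsequent layer's fold hyperplanes $\{x_i=0\}$ have the entire incoming image on one (closed) side, so no new folding occurs and the number of cells stays at $2^{n_0}$. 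Your proposal therefore only proves the bound $2^{n_0}\sum_{j=0}^{n_0}\binom{2n_0}{j}$, i.e.\ the $k=2$ case, for all $k$. The setup of the proof (pairs $\max\{0,s\}$, $\max\{0,-s\}$ summing to $|s|$, absorption of the affine combination via Lemma~\ref{lemma:decomposition}, and placing a general-position arrangement of $2n_0$ hyperplanes, with $\sum_{j=0}^{n_0}\binom{2n_0}{j}$ regions, inside the common image, in the spirit of Proposition~\ref{proposition:arrangement_in_sphere}) is exactly right and matches the paper; what is missing is the one ingredient that makes the multiplicative factor accumulate across layers.

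The missing ingredient is a \emph{shift}: each fold after the first must occur around a hyperplane lying strictly in the interior of the image of the preceding layers. This is precisely the role of the bias $(-\tfrac12,\ldots,-\tfrac12)$ in the paper's construction: feeding $s_i = y_i - \tfrac12$ (where $y_i$ is the sum of the $i$-th pair from the previous layer) into the next pair yields the per-coordinate map $t\mapsto |t-\tfrac12|$, a tent-like map that is genuinely two-to-one on the relevant interval rather than the identity. The $(k-1)$-fold composition of such shifted folds has $2^{k-1}$ monotone linear pieces per coordinate, each mapped affinely onto a common interval, giving the $2^{(k-1)n_0}$ cells across $n_0$ coordinates; the last hidden layer's arrangement is then chosen so that all of its regions meet the common image (the unit cube in the paper's proof, or your small ball, after rescaling). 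With the shifted folds in place, your final verification step --- that generic choices of the last-layer weights prevent pieces from merging across cell boundaries, since adjacent cells carry distinct linear parts of the composed fold map --- goes through as you describe.
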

 
 \begin{proof}
 We prove this constructively.  We define the rectifier units in each hidden
 layer in pairs, with the sum of each pair giving the absolute value of a
 coordinate axis.  We interpret also the sum of such pairs as the actual input
 coordinates of the subsequent hidden layers.  The rectifiers in the first
 hidden layer are defined in pairs, such that the sum of each pair is the
 absolute value of one of the input dimensions, with bias equal to
 $(-\tfrac12,\ldots,-\tfrac12)$.  In the next hidden layers, the rectifiers are
 defined in a similar way, with the difference that each pair computes the
 absolute value of the sum of two of their inputs.  The last hidden layer is
 defined in such a way that it computes a piece-wise linear function with the
 maximal number of pieces, all of them intersecting the unit cube in
 $\mathbb{R}^{n_0}$.  The maximal number of regions of linearity of $m$
 rectifier units with $n_0$-dimensional input is $\sum_{j=0}^{n_0}{m\choose
 j}$.  This partition is multiplied in each previous layer $2^{n_0}$ times.
 \end{proof}
 
 The theorem shows that even for a small number of layers $k$, we can have many
 more linear regions in a deep model than in a shallow one.  For example, if we
 set the input dimensionality to $n_0=2$, a shallow model with $4n_0$ units
 will have at most $37$ linear regions.  The equivalent deep model with two
 layers of $2n_0$ units can produce $44$ linear regions.  For $6n_0$ hidden
 units the shallow model computes at most $79$ regions, while the equivalent
 three layer model can compute $176$ regions.

\section{Discussion and conclusions}
\label{section:conclusions}

In this paper we introduced a novel way of understanding the expressiveness of
neural networks with piecewise linear activations.  We count the number of
regions of linearity, also called response regions, of the functions that they
can represent.  The number of response regions tells us how well the models can
approximate arbitrary curved shapes.  Computational Geometry provides us the
tool to make such statements. 

We found that deep and narrow rectifier MPLs can generate many more regions of
linearity than their shallow counterparts with the same number of computational
units or of parameters. 
We can express this in terms of the ratio between the maximal number of
response regions and the number of parameters of both model classes.  For a
deep model with $n_0=O(1)$ inputs and $k$ hidden layers of width $n$, the
maximal number of response regions per parameter behaves as
$$\Omega\left(\left\lfloor\frac{n}{n_0}\right\rfloor^{k-1}
\frac{n^{n_0-2}}{k}\right).$$ For a shallow model with $n_0=O(1)$ inputs, the
maximal number of response regions per parameter behaves as $$O\left(k^{n_0-1}
n^{n_0-1}\right).$$ We see that the deep model can generate many more response
regions per parameter than the shallow model; exponentially more regions per
parameter in terms of the number of hidden layers $k$, and at least order
$(k-2)$ polynomially more regions per parameter in terms of the layer width
$n$.  In particular, there are deep models which use fewer parameters to
produce more linear regions than their shallow counterparts.  Details about the
asymptotic expansions are given in the Appendix~\ref{section:asymptotic}.

In this paper we only considered linear output units, but this is not a
restriction, as the output activation itself is not parametrized.  If there is
a target function $f_{\text{targ}}$ that we want to model with a rectifier MLP
with $\sigma$ as its output activation function, then there exists a function
$f'_{\text{targ}}$ such that $\sigma(f'_{\text{targ}}) = f_{\text{targ}}$, when
$\sigma$ has an inverse (e.g., with sigmoid), $f'_{\text{targ}} =
\sigma^{-1}(f_{\text{targ}})$.  For activations that do not have an inverse,
like softmax, there are infinitely many functions $f'_{\text{targ}}$ that work.
We just need to pick one, e.g., for softmax we can pick
$\log(f_{\text{targ}})$.  By analyzing how well we can model $f'_{\text{targ}}$
with a linear output rectifier MLP we get an indirect measure of how well we
can model $f_{\text{targ}}$ with an MLP that has $\sigma$ as its output
activation.

Another interesting observation is that we recover a high ratio of $n$ to $n_0$
if the data lives near a low-dimensional manifold (effectively like reducing
the input size $n_0$).  One-layer models can reach the upper bound of response
regions only by spanning all the dimensions of the input.  In other words,
shallow models are not capable of concentrating linear response regions in any
lower dimensional subspace of the input. 
If, as commonly assumed, data lives near a low dimensional manifold, then we
care only about the number of response regions that a model can generate in the
directions of the data manifold.  One way of thinking about this is principal
component analysis (PCA), where one finds that only few input space directions
(say on the MNIST database) are relevant to the underlying data.  In such a
situation, one cares about the number of response regions that a model can
generate only within the directions in which the data does change. In such
situations $n \gg n_0$, and our results show a clear advantage of using deep
models. 

We believe that the proposed framework can be used to answer many other
interesting questions about these models. For example, one can look at how the
number of response regions is affected by different constraints of the model,
like shared weights.  We think that this approach can also be used to study
other kinds of piecewise linear models, such as convolutional networks with
rectifier units or maxout networks, or also for comparing between different
piecewise linear models.

\appendix

\section{Asymptotic}
 \label{section:asymptotic}
 
Here we derive asymptotic expressions of the formulas contained in
Proposition~\ref{proposition:onelayermodel} and
Theorem~\ref{theorem:klayermodel}.  We use following standard notation: 
 \begin{itemize}
  \item 
  $f(n) = O(g(n)) $ means that there is a positive constant $c_2$ such that $
  f(n) \leq c_2 g(n)$ for all $n$ larger than some $N$.  
 \item 
 $f(n) = \Theta(g(n)) $ means that there are two positive constants $c_1$ and
 $c_2$ such that $c_1 g(n) \leq f(n) \leq c_2 g(n)$ for all $n$ larger than
 some $N$. 
 \item 
 $f(n) = \Omega(g(n)) $ means that there is a positive constant $c_1$ such that
 $ f(n) \geq c_1 g(n)$ for all $n$ larger than some $N$.  
\end{itemize}

\begin{proposition}\mbox{}
\label{prop:bounds}
\begin{itemize}
\item 
Consider a single layer rectified MLP with $kn$ units and $n_0$ inputs.  Then
the maximal number of regions of linearity of the functions represented by this
network is 
\begin{equation*}
\mathcal{R} (n_0,kn,1) =  \sum_{s=0}^{n_0} {{ kn} \choose s} ,
\end{equation*}
and 
\begin{equation*}
    \mathcal{R} (n_0,kn,1) = O(k^{n_0}n^{n_0}),\quad\text{when $n_0=O(1)$.}
\end{equation*}
\item 
Consider a k layer rectified MLP with hidden layers of width $n$ and $n_0$
inputs.  Then the maximal number of regions of linearity of the functions
represented by this network satisfies 
\begin{equation*}
\mathcal{R} (n_0,n,\ldots,n,1)  \geq  
\left(
\prod_{i=1}^{k-1}
\left\lfloor \frac{n}{n_0}\right\rfloor
\right) 
\sum_{s=0}^{n_0} {n \choose s},  
\end{equation*}
and 
\begin{equation*}
    \mathcal{R} (n_0,n,\ldots,n,1)  = \Omega\left( \left\lfloor\frac{n}{n_0}\right\rfloor^{k-1}n^{n_0} \right),\quad\text{when $n_0=O(1)$}.
\end{equation*}
\end{itemize}
\end{proposition}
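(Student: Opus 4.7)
The plan is to observe that the two exact formulas in Proposition~\ref{prop:bounds} are immediate consequences of results already established earlier in the paper, so the real content of the proposition is the two asymptotic statements. I would therefore organize the proof into two halves, shallow and deep, and in each half first quote the earlier result and then perform the asymptotic estimate.

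For the shallow half, I would start by invoking Proposition~\ref{proposition:onelayermodel} with $n_1 = kn$, which gives $\mathcal{R}(n_0,kn,1) = \sum_{s=0}^{n_0}\binom{kn}{s}$ directly. For the $O(\cdot)$ bound under the assumption $n_0 = O(1)$, the key observation is that the sum has a bounded number of terms and the dominant one is $\binom{kn}{n_0}$. I would bound each $\binom{kn}{s} \leq (kn)^{s}/s!$, note that $(kn)^{s} \leq (kn)^{n_0}$ once $kn \geq 1$, and conclude that the entire sum is at most $(n_0+1)\cdot (kn)^{n_0}/(\min_{s} s!)$, which is $O(k^{n_0} n^{n_0})$ because the multiplicative factor depends only on $n_0$.

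For the deep half, I would apply Theorem~\ref{theorem:klayermodel} with $n_1 = \cdots = n_k = n$, which yields the claimed lower bound $\bigl(\prod_{i=1}^{k-1}\lfloor n/n_0\rfloor\bigr)\sum_{s=0}^{n_0}\binom{n}{s} = \lfloor n/n_0\rfloor^{k-1}\sum_{s=0}^{n_0}\binom{n}{s}$. The asymptotic step is to argue that $\sum_{s=0}^{n_0}\binom{n}{s} = \Omega(n^{n_0})$ when $n_0 = O(1)$: the single term $\binom{n}{n_0} = n(n-1)\cdots(n-n_0+1)/n_0!$ is already at least $(n-n_0+1)^{n_0}/n_0! \geq c \cdot n^{n_0}$ for some constant $c$ depending only on $n_0$, once $n$ is sufficiently large. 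Multiplying gives the desired $\Omega(\lfloor n/n_0\rfloor^{k-1}n^{n_0})$.

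There is no real obstacle here; the only mildly delicate point is being explicit that the implicit constants in $O$ and $\Omega$ are allowed to depend on $n_0$ (since $n_0 = O(1)$ is part of the hypothesis), so that swapping $(kn)^{s}$ for $(kn)^{n_0}$ in the shallow bound and dropping all but the leading binomial in the deep bound are legitimate. I would state this dependence explicitly at the start of each estimate to keep the asymptotic bookkeeping clean.
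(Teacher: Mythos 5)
Your proposal is correct, and its skeleton matches the paper's: the paper likewise treats the two exact formulas as already established (by Proposition~\ref{proposition:onelayermodel} and Theorem~\ref{theorem:klayermodel}) and opens its proof with ``only the asymptotic expressions remain to be shown.'' Where you diverge is in how the asymptotics are handled. The paper quotes two known identities --- $\sum_{s=0}^{n_0}\binom{m}{s} = \Theta\bigl((1-\tfrac{2n_0}{m})^{-1}\binom{m}{n_0}\bigr)$ for $n_0 \leq \tfrac{m}{2}-\sqrt{m}$, and $\binom{m}{s} = \tfrac{m^s}{s!}\bigl(1+O(\tfrac{1}{m})\bigr)$ for $s=O(1)$ --- and combines them to get two-sided $\Theta$ estimates for both sums before specializing to the one-sided claims. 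You instead prove exactly the one-sided bounds needed by elementary term-by-term estimates: $\binom{kn}{s}\leq (kn)^s/s! \leq (kn)^{n_0}$ summed over at most $n_0+1$ terms for the shallow $O(k^{n_0}n^{n_0})$ bound, and the single term $\binom{n}{n_0}\geq (n-n_0+1)^{n_0}/n_0! \geq c\,n^{n_0}$ for the deep $\Omega$ bound. Your route is more self-contained (no external asymptotic facts, hence no side condition like $n_0\leq \tfrac{m}{2}-\sqrt{m}$ to verify) and makes the dependence of the constants on $n_0$ explicit, which you rightly flag as the one delicate point; the paper's route buys the stronger $\Theta$ statements for the binomial sums, which is what justifies the matching-order claims made later in the regions-per-parameter discussion. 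For the proposition as literally stated, your argument is complete.
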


\begin{proof}
Here only the asymptotic expressions remain to be shown. 
It is known that 
\begin{equation}
\sum_{s=0}^{n_0} { m \choose s} = \Theta\left( \left(1-\frac{2n_0}{m}\right)^{-1} {m\choose n_0} \right),\quad \text{when $n_0\leq \frac{m}{2}-\sqrt{m}$}. 
\end{equation}

Furthermore, it is known that 
\begin{equation}
{m\choose s} = \frac{m^s}{s!}\left( 1+ O(\tfrac{1}{m}) \right),\quad\text{when $s=O(1)$}. 
\end{equation}

When $n_0$ is constant, $n_0=O(1)$, we have that 
\begin{equation*}
{kn\choose n_0} = 
 \frac{k^{n_0}}{n_0!}n^{n_0} \left(1+ O\left(\tfrac{1}{kn}\right) \right) . 
\end{equation*}
In this case, it follows that 
\begin{equation*}
 \sum_{s=0}^{n_0} { k n \choose s} = \Theta\left( \left(1-\frac{ 2 n_0}{kn}\right)^{-1} {kn\choose n_0} \right) = \Theta\left( k^{n_0} n^{n_0}  \right)  
\quad \text{and also }\quad 
\sum_{s=0}^{n_0} {  n \choose s}  
 =  \Theta(n^{n_0}). 
\end{equation*}
Furthermore,
\begin{align*}
    \left(\prod_{i=1}^{k-1}\left\lfloor\frac{n}{n_0}\right\rfloor\right)  \sum_{s=0}^{n_0} {  n \choose s}   
    &=  \Theta\left(\left\lfloor\frac{ n}{n_0 }\right\rfloor^{k-1}   n^{n_0}\right). 
\qedhere
\end{align*}
\end{proof}

We now analyze the number of response regions as a function of the number of parameters. 
When $k$ and $n_0$ are fixed, then $\left\lfloor {n}/{n_0}\right\rfloor^{k-1}$ grows polynomially in $n$, and $k^{n_0}$ is constant. 
On the other hand, when $n$ is fixed with $n > 2n_0$, then $\left\lfloor {n}/{n_0}\right\rfloor^{k-1}$ 
grows exponentially in $k$, and $k^{n_0}$ grows polynomially in $k$. 

\begin{proposition}
\label{prop:number_params}
The number of parameters of a deep model with 
$n_0 = O(1)$ inputs, $n_{\operatorname{out}}=O(1)$ outputs, and $k$ hidden layers of width $n$ is 
$$(k-1) n^2 + (k + n_0 + n_{\operatorname{out}}) n + n_{\operatorname{out}} = O(k n^2).$$ 
The number of parameters of a shallow model with $n_0 = O(1)$ inputs, $n_{\operatorname{out}} = O(1)$ outputs, and  
$k n$ hidden units is
$$ (n_0 + n_{\operatorname{out}}) k n + n + n_{\operatorname{out}} = O(k n).$$  
\end{proposition}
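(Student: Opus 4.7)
The plan is a direct counting exercise: enumerate the weight matrix and bias vector attached to every layer, sum the entries, and read off the asymptotic order. No earlier result in the paper is needed, since neither statement involves regions of linearity; both claims are pure parameter counts.

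For the deep model, I would proceed layer by layer. The first hidden layer maps $\R^{n_0}\to\R^{n}$, contributing a weight matrix of size $n\times n_0$ and a bias vector of size $n$, for a total of $n_0 n + n$ parameters. Each of the subsequent $k-1$ hidden layers maps $\R^{n}\to\R^{n}$ and contributes $n^2+n$ parameters, giving $(k-1)(n^2+n)$ in aggregate. The linear output layer maps $\R^{n}\to\R^{n_{\operatorname{out}}}$ and contributes $n_{\operatorname{out}} n + n_{\operatorname{out}}$. Summing and regrouping by powers of $n$ yields $(k-1)n^2 + (n_0 + 1 + (k-1) + n_{\operatorname{out}})n + n_{\operatorname{out}} = (k-1)n^2 + (k+n_0+n_{\operatorname{out}})n + n_{\operatorname{out}}$. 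Under the assumption $n_0,n_{\operatorname{out}}=O(1)$, the leading $(k-1)n^2$ term dominates, so the total is $O(kn^2)$.

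For the shallow model the same procedure applies, but with a single hidden layer of width $kn$. The input-to-hidden weights and biases contribute $n_0 (kn) + kn$, and the hidden-to-output weights and biases contribute $n_{\operatorname{out}}(kn) + n_{\operatorname{out}}$. Combining these gives $(n_0+n_{\operatorname{out}})kn + kn + n_{\operatorname{out}}$, which for $n_0,n_{\operatorname{out}}=O(1)$ is $O(kn)$, matching the stated asymptotic order.

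There is no genuine obstacle here: the proposition is pure bookkeeping. The only points requiring a little care are to include the bias vector at every layer (including the linear output layer) and to remember that the weight matrix joining layers of widths $a$ and $b$ contributes exactly $ab$ real parameters. I would therefore present the argument as a short direct calculation rather than invoking any earlier lemma or proposition from the paper.
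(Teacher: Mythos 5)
Your proof is correct and follows essentially the same layer-by-layer counting as the paper's own proof, matching its deep-model total $(k-1)n^2+(k+n_0+n_{\operatorname{out}})n+n_{\operatorname{out}}$ exactly. One remark: your shallow-model count $(n_0+n_{\operatorname{out}})kn+kn+n_{\operatorname{out}}$ differs from the stated formula in the lower-order bias term ($kn$ versus $n$), but yours is the internally consistent one --- the paper's own proof counts ``$kn$ bias'' for the hidden layer yet writes $+n$ in the final sum, an evident typo that does not affect the $O(kn)$ asymptotics.
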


\begin{proof}
 For the deep model, each layer, except the first and last, 
 has an input weight matrix with $n^2$ entries and a bias vector of length $n$. 
 This gives a total of $(k-1) n^2 + (k-1) n$ parameters. 
 The first layer has $n n_0$ input weights  and $n$ bias. 
 The output layer has $n n_{out}$ input weight matrix and $n_{\operatorname{out}}$ bias. 
 If we sum these together we get $$ (k-1) n^2 + n (k + n_0 + n_{\operatorname{out}}) + n_{\operatorname{out}} = O( k n^2).$$  

 For the shallow model, the hidden layer has $k n n_0$ input weights and $k n$ bias. 
 The output weights has $k n n_{\operatorname{out}}$ input weights and $n_{\operatorname{out}}$ bias. 
 Summing these together we get
$$ k n (n_0 + n_{\operatorname{out}}) + n + n_{\operatorname{out}} = O (k n). $$
\end{proof}

The number of linear regions per parameter can be given as follows. 

\begin{proposition}
 Consider a fixed number of inputs $n_0$ and a fixed number of outputs $n_{\operatorname{out}}$. 
 The maximal ratio of the number of response regions to the number of parameters 
 of a deep model with $k$ layers of width $n$ is 
$$\Omega\left(\left\lfloor\frac{n}{n_0}\right\rfloor^{k-1} \frac{n^{n_0-2}}{k}\right).$$
In the case of a shallow model with $kn$ hidden units, the ratio is 
$$O\left(k^{n_0-1} n^{n_0-1}\right).$$
\end{proposition}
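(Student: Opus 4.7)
The plan is straightforward: in each case I would divide the bound on the maximal number of response regions from Proposition~\ref{prop:bounds} by the parameter count from Proposition~\ref{prop:number_params}. Since the parameter count is determined entirely by the architecture $(n_0, n, k, n_{\operatorname{out}})$ and does not depend on the choice of weights, the ratio of the maximum number of regions to the number of parameters is literally (maximum regions)/(parameters); no separate optimization over the function class is needed.

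For the deep model I would invoke the asymptotic lower bound $\mathcal{R}(n_0, n, \ldots, n, 1) = \Omega\bigl(\lfloor n/n_0\rfloor^{k-1} n^{n_0}\bigr)$ from Proposition~\ref{prop:bounds}, which is valid when $n_0 = O(1)$, together with the exact parameter count $(k-1) n^2 + (k + n_0 + n_{\operatorname{out}}) n + n_{\operatorname{out}} = \Theta(k n^2)$ from Proposition~\ref{prop:number_params} (using $n_0, n_{\operatorname{out}} = O(1)$). Dividing these gives $\Omega\bigl(\lfloor n/n_0\rfloor^{k-1} n^{n_0-2}/k\bigr)$, matching the claimed bound.

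For the shallow model with $kn$ hidden units I would use $\mathcal{R}(n_0, kn, 1) = O(k^{n_0} n^{n_0})$ from Proposition~\ref{prop:bounds} together with the parameter count $(n_0 + n_{\operatorname{out}}) k n + n + n_{\operatorname{out}} = \Theta(k n)$ from Proposition~\ref{prop:number_params}. The quotient is $O(k^{n_0-1} n^{n_0-1})$, as claimed.

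There is essentially no real obstacle beyond keeping the direction of each asymptotic inequality straight. The deep claim is an $\Omega$ statement, so I need a lower bound on regions combined with an upper bound (or equality) on parameters; the shallow claim is an $O$ statement, so I need an upper bound on regions combined with a lower bound (or equality) on parameters. Because Proposition~\ref{prop:number_params} supplies exact parameter counts rather than one-sided bounds, both directions are available for free, and each case reduces to a single line of asymptotic arithmetic.
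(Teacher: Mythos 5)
Your proposal is correct and is essentially the paper's own proof: the paper also derives the ratios by combining the asymptotic region counts of Proposition~\ref{prop:bounds} with the exact parameter counts of Proposition~\ref{prop:number_params}. Your added care about matching the $\Omega$ direction with a lower bound on regions and the $O$ direction with an upper bound is exactly the (implicit) content of that one-line combination.
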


\begin{proof}
 This is by combining Proposition~\ref{prop:bounds} and Proposition~\ref{prop:number_params}.   
\end{proof}

We see that fixing the number of parameters, deep models can compute functions
with many more regions of linearity that those computable by shallow models.
The ratio is exponential in the number of hidden layers $k$ and thus in the
number of hidden units. 

{\small
\subsubsection*{Acknowledgments}
We would like to thank KyungHyun Cho, \c{C}a\u{g}lar G\"{u}l\c{c}ehre, and anonymous ICLR reviewers 
for their comments. 
Razvan Pascanu is supported by a DeepMind Fellowship.

\bibliographystyle{abbrvnat}
\bibliography{references}{}

\end{document}